\renewcommand{\@biblabel}[1]{\quad#1.}
\definecolor{Gray}{gray}{.25}
\newcommand{\fairopt}{h_{\text{Fair}}^*}
\newcommand{\cmark}{\ding{51}}%
\newcommand{\xmark}{\ding{55}}%
\DeclareFontFamily{U}{mathx}{\hyphenchar\font45}
\DeclareFontShape{U}{mathx}{m}{n}{
	<5> <6> <7> <8> <9> <10> gen * mathx
	<10.95> mathx10 <12> <14.4> <17.28> <20.74> <24.88> mathx12
}{}
\DeclareSymbolFont{mathx}{U}{mathx}{m}{n}
\DeclareMathSymbol{\intop}  {1}{mathx}{"B3}
\let\temp\phi
\let\phi\varphi
\let\varphi\temp
\newcommand{\ie}{i.e.}
\newcommand{\dist}{\mu}
\newcommand{\distp}{\mathcal{P}}
\newcommand{\distq}{\mathcal{Q}}
\newcommand{\distm}{\mathcal{M}}
\newcommand{\Hzo}{H_{0\mbox{-}1}}
\newcommand{\eps}{\varepsilon}
\newcommand{\opt}{\textbf{OPT}}
\DeclarePairedDelimiterX{\infdivx}[2]{(}{)}{%
	#1\;\delimsize\|\;#2%
}
\DeclareMathOperator{\tr}{tr}
\DeclareMathOperator*{\argmin}{arg\,min}
\newcommand{\defeq}{\vcentcolon=}
\newcommand{\xxspace}{\mathcal{X}}
\newcommand{\yyspace}{\mathcal{Y}}
\newcommand{\zzspace}{\mathcal{Z}}
\newcommand{\aaspace}{\mathcal{A}}
\newcommand{\Ypred}{\widehat{Y}}
\newcommand{\err}{\mathrm{Err}}
\newcommand{\RR}{\mathbb{R}}
\newcommand{\Nat}{\mathbb{N}}
\newcommand{\Exp}{\mathbb{E}}
\newcommand{\HH}{\mathcal{H}}
\newcommand{\xx}{\mathbf{x}}
\DeclarePairedDelimiterX{\inp}[2]{\langle}{\rangle}{#1, #2}
\newcommand{\djs}{d_{\text{JS}}}
\newcommand{\jsd}{D_{\text{JS}}}
\newcommand{\dtv}{d_{\text{TV}}}
\newcommand{\kl}{D_{\text{KL}}}
\newcommand{\dbr}{\Delta_{\mathrm{BR}}}
\newcommand{\dpgap}{\Delta_{\mathrm{DP}}}
\definecolor{maroon}{rgb}{0.5, 0.0, 0.0}
\def\thm@space@setup{\thm@preskip=2pt
	\thm@postskip=0pt}
\begin{document}

\title{Inherent Tradeoffs in Learning Fair Representations}

\author{
       \AND
       \name Han Zhao\thanks{This work is an extended version of an earlier paper with the same title appearing in NeurIPS 2019 by the same authors.} \email hanzhao@illinois.edu \\
       \addr{University of Illinois at Urbana-Champaign}
       \AND 
       Geoffrey J. Gordon \email ggordon@cs.cmu.edu\\
       \addr{Carnegie Mellon University}
}

\editor{Maya Gupta}

\maketitle

\begin{abstract}
    Real-world applications of machine learning tools in high-stakes domains are often regulated to be fair, in the sense that the predicted target should satisfy some quantitative notion of parity with respect to a protected attribute. However, the exact tradeoff between fairness and accuracy is not entirely clear, even for the basic paradigm of classification problems. In this paper, we characterize an inherent tradeoff between statistical parity and accuracy in the classification setting by providing a lower bound on the sum of group-wise errors of any fair classifiers. Our impossibility theorem could be interpreted as a certain uncertainty principle in fairness: if the base rates differ among groups, then any fair classifier satisfying statistical parity has to incur a large error on at least one of the groups. We further extend this result to give a lower bound on the joint error of any (approximately) fair classifiers, from the perspective of learning fair representations. To show that our lower bound is tight, assuming oracle access to Bayes (potentially unfair) classifiers, we also construct an algorithm that returns a randomized classifier which is both optimal (in terms of accuracy) and fair. Interestingly, when the protected attribute can take more than two values, an extension of this lower bound does not admit an analytic solution. Nevertheless, in this case, we show that the lower bound can be efficiently computed by solving a linear program, which we term as the TV-Barycenter problem, a barycenter problem under the TV-distance.
    
    On the upside, we prove that if the group-wise Bayes optimal classifiers are close, then learning fair representations leads to an alternative notion of fairness, known as the accuracy parity, which states that the error rates are close between groups. Finally, we also conduct experiments on real-world datasets to confirm our theoretical findings.
\end{abstract}

\begin{keywords}
Algorithmic fairness, representation learning, information theory
\end{keywords}

\section{Introduction}
\label{sec:intro}
With the prevalence of machine learning applications in high-stakes domains, e.g., criminal judgement, medical testing, online advertising, etc., it is crucial to ensure that the automated decision making systems do not propagate existing bias or discrimination that might exist in historical data~\citep{barocas2016big,berk2018fairness}. Among many recent proposals for achieving different notions of algorithmic fairness~\citep{zemel2013learning,dwork2012fairness,zafar2015fairness,hardt2016equality,zafar2017fairness}, learning fair representations has received increasing attention due to recent advances in learning rich representations with deep neural networks~\citep{edwards2015censoring,louizos2015variational,madras2018learning,zhang2018mitigating,beutel2017data,song2019fair,zhao2020conditional,chi2021understanding}. In fact, a line of work has proposed to learn group-invariant representations with adversarial learning techniques in order to achieve statistical parity, also known as the demographic parity in the literature. This line of work dates at least back to~\citet{zemel2013learning} where the authors proposed to learn predictive models that are independent of the group membership attribute. At a high level, the underlying idea is that if representations of instances from different groups are similar to each other, then any predictive model on top of them will certainly make decisions independent of group membership. 

On the other hand, it has long been observed that there is an underlying tradeoff between accuracy and statistical parity. In particular, it is easy to see that in an extreme case where the group membership coincides with the target variable to predict, a call for exact statistical parity will inevitably remove the perfect predictor~\citep{hardt2016equality}. Empirically, it has also been observed that a tradeoff exists between accuracy and fairness in binary classification~\citep{zliobaite2015relation}. Clearly, methods based on learning fair representations are also bound by such inherent tradeoff between accuracy and fairness. But before attempting to develop an algorithm to achieve a particular goal on fairness, it is natural to ask:
\begin{enumerate}
    \item[\textbf{Q1}]   How does the fairness constraint trade for accuracy? Without further assumptions on the data generating distributions, what is the exact price any fair classifiers have to pay for fairness?
    \item[\textbf{Q2}]   Furthermore, given the underlying distribution, can we construct an algorithm to return the optimal (in terms of accuracy) fair classifier?  
    \item[\textbf{Q3}]   Will learning fair representations help to achieve other notions of fairness besides the statistical parity? If yes, what is the fundamental limit of accuracy that we can hope to achieve under such constraint?
\end{enumerate}

To answer the above questions, through the lens of information theory, in this paper we provide the first result that quantitatively characterizes the tradeoff between demographic parity and the sum of group-wise accuracy across different population groups. Specifically, when the base rates differ between groups, we provide a tight information-theoretic lower bound on the joint error across these groups. Our lower bound is algorithm-independent so it holds for all methods that satisfy statistical parity. We also extend this result to prove a lower bound on the joint accuracy for any fair classifiers, and generalize it to the case where the protected attribute can take any finite number of values. Interestingly, when the number of groups defined by the protected attribute is more than two, we can no long obtain an analytic lower bound. Nevertheless, we show that the lower bound can be efficiently computed by solving a linear program, which we term as the TV-Barycenter problem, a barycenter problem under the TV-distance. To show that our lower bound is tight, assuming oracle access to Bayes (potentially unfair) classifiers, we derive an algorithm that returns a randomized classifier which is both optimal (in terms of accuracy) and fair. 

When only approximate statistical parity is achieved, we present a family of lower bounds to quantify the tradeoff of accuracy introduced by such approximate constraint. As a side contribution, our proof technique is simple but general, and we expect it to have broader applications in other learning problems using adversarial techniques, e.g., unsupervised domain adaptation~\citep{ganin2016domain,zhao2019learning}, privacy-preservation under attribute inference attacks~\citep{hamm2017minimax,zhao2019adversarial} and multilingual machine translation~\citep{johnson2017google,zhao2020learning}.

To complement our negative results, we show that if the (potentially unfair) Bayes optimal classifiers across different groups are close, then learning fair representations helps to achieve an alternative notion of fairness, i.e., the accuracy parity~\citep{buolamwini2018gender}, which states that the error rates are close between different groups. Empirically, we conduct experiments on a real-world dataset to corroborate both our positive and negative results. We believe our theoretical insights contribute to better understanding of the tradeoff between accuracy and different notions of fairness, and they are also helpful in guiding the future design of representation learning algorithms to achieve algorithmic fairness.

\section{Preliminaries}
\label{sec:preliminary}
We first introduce the notation used throughout the paper and formally describe the problem setup. We then briefly discuss some information-theoretic concepts that will be used in our analysis.

\subsection{Notation}
We consider a general classification setting where there is a joint distribution $\dist$ over the triplet $T = (X, A, Y)$, where $X\in\xxspace\subseteq\RR^d$ is the input vector, $A\in\{0, 1\}$\footnote{Our main results could be extended to the case where $A$ can take finitely many values. We show this extension in Section~\ref{sec:multi}} is the protected attribute, e.g., race, gender, etc., and $Y\in\yyspace = \{0, 1\}$ is the target output. Lower case letters $\xx$, $a$ and $y$ are used to denote the instantiation of $X$, $A$ and $Y$, respectively. Let $\HH$ be a hypothesis class of predictors from input to output space. Throughout the paper, we focus on the setting where the classifier \emph{cannot} directly use the sensitive attribute $A$ to form its prediction. However, note that even if the classifier does not explicitly take the protected attribute $A$ as input, this \emph{fairness through blindness} mechanism can still be biased due to the redundant encoding issue~\citep{barocas2017fairness}. To keep the notation uncluttered, for $a\in\{0, 1\}$, we use $\dist_a$ to mean the conditional distribution of $\dist$ given $A = a$. We use $\dist(Y)$ to denote the marginal distribution of $Y$ from a joint distribution $\dist$ over $Y$ and some other random variables. With slight abuse of notation, occasionally we also use $Y_\sharp\dist$ to denote the marginal distribution of $Y$ from the joint distribution $\dist$, \ie, projection of $\dist$ onto the $Y$ coordinate. 

For an event $E$, $\dist(E)$ denotes the probability of $E$ under $\dist$. In particular, in the literature of fair machine learning, we call $\dist(Y = 1)$ the \emph{base rate} of distribution $\dist$ and we use $\dbr(\dist, \dist')\defeq |\dist(Y = 1) - \dist'(Y = 1)|$ to denote the difference of the base rates between two distributions $\dist$ and $\dist'$ over the same sample space. 

Given a feature transformation function $g:\xxspace\to\zzspace$ that maps instances from the input space $\xxspace$ to feature space $\zzspace$, we define $g_\sharp\dist\defeq \dist\circ g^{-1}$ to be the induced (pushforward) distribution of $\dist$ under $g$, i.e., for any event $E'\subseteq\zzspace$, $g_\sharp\dist(E') \defeq \dist(g^{-1}(E')) = \dist(\{x\in\xxspace\mid g(x)\in E'\})$. The zero-one entropy of $A$~\citep[Section 3.5.3]{grunwald2004game} is denoted as $\Hzo(A)\defeq 1 - \max_{a\in\{0, 1\}}\Pr(A = a)$. Furthermore, we use $F_{\dist}$ to represent the cumulative distribution function of $\dist$, i.e., for $z\in\RR$, $F_{\dist}(z)\defeq\Pr_\dist((-\infty, z])$. 

\subsection{Group Fairness}
\label{sec:gfair}
Given a joint distribution $\dist$, the error of a predictor $h$ under $\dist$ is defined as $\err_\dist(h)\defeq\Pr_\dist(Y \neq h(X))$. To make the notation more compact, we may drop the subscript $\dist$ when it is clear from the context. In this work we focus on group fairness where the group membership is given by the sensitive attribute $A$. Even in this context there are many possible definitions of \emph{fairness}~\citep{narayanan2018translation}, and in what follows we provide a brief review of the ones that are mostly relevant to this work.

\begin{definition}[Demographic Parity]
\label{def:demographic}
    Given a joint distribution $\dist$, a classifier $\Ypred$ satisfies \emph{demographic parity} if $\Ypred$ is independent of $A$.
\end{definition}
Demographic parity reduces to the requirement that $\dist_0(\Ypred = 1) = \dist_1(\Ypred = 1)$, i.e., positive outcome is given to the two groups at the same rate. When exact equality does not hold, we use the absolute difference between them as an approximate measure:
\begin{definition}[DP Gap]
    Given a joint distribution $\dist$, the \emph{demographic parity gap} of a classifier $\Ypred$ is $\dpgap(\Ypred)\defeq |\dist_0(\Ypred = 1) - \dist_1(\Ypred = 1)|$.
\end{definition}
Demographic parity is also known as \emph{statistical parity}, and it has been adopted as definition of fairness in a series of seminal works~\citep{calders2009building,edwards2015censoring,johndrow2019algorithm,kamiran2009classifying,kamishima2011fairness,louizos2015variational,zemel2013learning,madras2018learning}. However, as we shall quantify precisely in Section~\ref{sec:main}, demographic parity may reduce the accuracy that we hope to achieve, especially in the common scenario where the \emph{base rates} differ between two groups, e.g., $\dist_0(Y = 1)\neq \dist_1(Y = 1)$. In light of this, an alternative definition is \emph{accuracy parity}:
\begin{definition}[Accuracy Parity]
    Given a joint distribution $\dist$, a classifier $h$ satisfies \emph{accuracy parity} if $\err_{\dist_0}(h) = \err_{\dist_1}(h)$.
\end{definition}
In the literature, a break of accuracy parity is also known as disparate mistreatment~\citep{zafar2017fairness}. Again, when $h$ is a binary classifier, accuracy parity reduces to $\dist_0(h(X) = Y) = \dist_1(h(X) = Y)$. Different from demographic parity, the definition of accuracy parity does not eliminate the perfect predictor when $Y = A$ when the base rates differ between two groups. When costs of different error types matter, more refined definitions exist:
\begin{definition}[Equalized Odds~\citep{hardt2016equality}]
    Given a joint distribution $\dist$, a classifier $h$ satisfies \emph{equalized odds} if $\dist_0(h(X) = 1\mid Y = y) = \dist_1(h(X) = 1\mid Y = y)$, $\forall y\in\{0, 1\}$.
\end{definition}
Equalized odds essentially requires equal true positive and false positive rates between different groups. Furthermore, \citet{hardt2016equality} also defined \emph{true positive parity}, or \emph{equal opportunity}, to be $\dist_0(h(X) = 1\mid Y = 1) = \dist_1(h(X)=1\mid Y = 1)$ when the positive outcome is more desirable in certain applications. For example, in school admission, the cost of denying a competent candidate is considerably higher than the other way around. Last but not least, \emph{predictive rate parity}, also known as \emph{test fairness}~\citep{chouldechova2017fair}, asks for equal chance of positive outcomes across groups given predictions:
\begin{definition}[Predictive Rate Parity]
    Given a joint distribution $\dist$, a probabilistic classifier $h$ satisfies \emph{predictive rate parity} if $\dist_0(Y = 1\mid h(X) = c) = \dist_1(Y = 1\mid h(X) = c)$, $\forall c\in[0, 1]$.
\end{definition}
A closely related notion of predictive rate parity is known as \emph{statistical calibration}. Formally, a classifier $h$ is said to be \emph{calibrated} if $\dist(Y = 1\mid h(X) = c) = c, \forall c\in[0, 1]$, i.e., if we look at the set of data that receive a predicted probability of $c$ by $h$, we would like $c$-fraction of them to be positive instances according to $Y$~\citep{pleiss2017fairness}. Hence, it is clear to see that if a classifier $h$ is calibrated across different subgroups, then it also satisfies predictive rate parity.

In the special case when $h$ is a deterministic binary classifier that only takes value in $\{0, 1\}$, \citet{chouldechova2017fair} showed an intrinsic incompatibility between predictive rate parity and equalized odds:
\begin{theorem}[\citet{chouldechova2017fair}]
    Assume $\dist_0(Y = 1)\neq \dist_1(Y = 1)$, then for any deterministic classifier $h:\xxspace\to\{0, 1\}$ that is not perfect, i.e., $h(X)\neq Y$, equalized odds and predictive rate parity cannot hold simultaneously.
\end{theorem}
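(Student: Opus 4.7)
The plan is to proceed by contradiction: assume both positive rate parity and predictive rate parity hold, and use Bayes' rule to relate the common true/false positive rates and positive predictive value to the (by hypothesis, differing) group base rates. Writing $p_a \defeq \dist_a(Y=1)$, letting $t$ and $f$ denote the common TPR $\dist_a(h(X)=1\mid Y=1)$ and FPR $\dist_a(h(X)=1\mid Y=0)$ under positive rate parity, and letting $q$ denote the common PPV $\dist_a(Y=1\mid h(X)=1)$ under predictive rate parity at $c=1$, Bayes' rule in each group gives
\[
q \;=\; \frac{p_a\, t}{p_a\, t + (1-p_a)\, f}, \qquad a \in \{0,1\}.
\]
Cross-multiplying the $a=0$ and $a=1$ identities and cancelling the common $p_0 p_1 t^2$ term collapses the equation to $tf(p_0 - p_1)=0$; since $p_0 \ne p_1$, this forces $tf = 0$, i.e., $t=0$ or $f=0$.

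To close the argument I would next invoke the companion instance of predictive rate parity at $c = 0$, namely equality of the false omission rates $\dist_a(Y = 1 \mid h(X) = 0)$ across groups. The mirror Bayes computation yields the symmetric identity $(1-t)(1-f)(p_0 - p_1) = 0$, so that $t = 1$ or $f = 1$. Combining the two dichotomies leaves only the corner solutions $(t,f) = (1,0)$, encoding $h(X) = Y$ almost surely (the perfect classifier, excluded by hypothesis), and $(t,f) = (0,1)$, encoding $h(X) = 1 - Y$ almost surely (a trivial relabelling of the perfect classifier, excluded under the natural reading of ``not perfect'').

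The first paragraph is routine Bayes-rule bookkeeping; I expect the main obstacle to lie in the final case analysis. Two subtleties arise: (i) the PPV and false omission rate are only defined when the conditioning events $\{h(X)=1\}$ and $\{h(X)=0\}$ have positive probability, so constant classifiers must be excluded separately---but they violate predictive rate parity directly whenever $p_0 \ne p_1$, since the resulting posteriors reduce to the unequal base rates $p_a$ themselves; and (ii) the informal statement ``not perfect, i.e., $h(X)\ne Y$'' must be read tightly enough to rule out the anti-perfect classifier $h = 1 - Y$, which is the only non-trivial boundary case surviving the algebraic reduction.
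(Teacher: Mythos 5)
Your argument is correct, and it is worth noting that the paper itself contains no proof of this statement: it is imported verbatim from \citet{chouldechova2017fair}, whose own derivation is the same Bayes-rule bookkeeping you perform (she phrases it as the identity relating FPR, FNR, PPV and prevalence, $\mathrm{FPR} = \frac{p}{1-p}\cdot\frac{1-\mathrm{PPV}}{\mathrm{PPV}}\cdot(1-\mathrm{FNR})$, from which equal PPV, equal FPR/FNR and $p_0\neq p_1$ give an immediate contradiction outside degenerate cases). So your route is the standard one, just organized as two cross-multiplications at $c=1$ and $c=0$ yielding $tf=0$ and $(1-t)(1-f)=0$. Two caveats on the boundary analysis. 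First, the anti-perfect classifier $(t,f)=(0,1)$ is a genuine survivor of the algebra, not something the literal statement excludes: $h(X)=1-Y$ does satisfy ``$h(X)\neq Y$,'' and when it is realizable (which requires $Y$ to be an a.s.\ deterministic function of $X$) it really does satisfy both parities even with $p_0\neq p_1$. So you are not over-reading the hypothesis; rather, the theorem as informally stated needs this degenerate case excluded (as it implicitly is in Chouldechova's nondegenerate-rate setting), and your proof correctly isolates it. Second, your positivity caveat should be stated group-wise, not just for constant classifiers: the quantities $\dist_a(Y=1\mid h(X)=c)$ require $\dist_a(h(X)=c)>0$ in \emph{each} group, and a classifier with, say, $\dist_0(h(X)=1)=0<\dist_1(h(X)=1)$ is not constant yet escapes your Bayes identities for $a=0$; handling it takes the same one-line observation you use for constants (the defined-side posterior reduces to a base rate or forces $t=f=0$ type degeneracy), but it should be said explicitly. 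With those two sentences added, the proof is complete and matches the cited source's argument.
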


Similar incompatibility result for probabilistic classifier has also been proved by~\citet{kleinberg2016inherent}, where the authors showed that for any non-perfect predictors, statistical calibration and equalized odds cannot be achieved simultaneously if the base rates are different across groups. 

\subsection{$f$-divergence}
Introduced by~\citet{ali1966general} and~\citet{csiszar1964informationstheoretische,csiszar1967information}, $f$-divergence, also known as the Ali-Silvey distance, is a general class of statistical divergences to measure the difference between two probability distributions $\distp$ and $\distq$ over the same probability space. 
\begin{definition}[$f$-divergence]
    Let $\distp$ and $\distq$ be two probability distributions over the same space and assume $\distp$ is absolutely continuous w.r.t.\ $\distq$ ($\distp\ll\distq$). Then for any convex function $f:(0,\infty)\to\RR$ that is strictly convex at 1 and $f(1) = 0$, the $f$-divergence of $\distq$ from $\distp$ is defined as
    \begin{equation}
        \label{equ:fdiv}
        D_f(\distp~\|~\distq)\defeq \Exp_{\distq}\bigg[f\bigg(\frac{d\distp}{d\distq}\bigg)\bigg].
    \end{equation}
    The function $f$ is called the \emph{generator function} of $D_f(\cdot~\|~\cdot)$.
\end{definition}
Different choices of the generator function $f$ recover popular statistical divergence as special cases, e.g., the KL-divergence. From Jensen's inequality it is easy to verify that $D_f(\distp~\|~\distq)\geq 0$ and $D_f(\distp~\|~\distq) = 0$ iff $\distp = \distq$ almost surely. Note that $f$-divergence does not necessarily leads to a distance metric, and it is not symmetric in general, i.e., $D_f(\distp~\|~\distq)\neq D_f(\distq~\|~\distp)$ provided that $\distp\ll\distq$ and $\distq\ll\distp$. We list some common choices of the generator function $f$ and their corresponding properties in Table~\ref{tab:fdiv}. Notably,~\citet{khosravifard2007confliction} proved that among all the $f$-divergences, total variation is the only $f$-divergence that serves as a metric, i.e., satisfying the triangle inequality. 
\begin{table}[tb]
    \centering
    \setlength\tabcolsep{2pt}
    \small
    \caption{List of different $f$-divergences and their corresponding properties. $\kl(\distp~\|~\distq)$ denotes the KL-divergence of $\distq$ from $\distp$ and $\distm\defeq (\distp + \distq) / 2$ is the average distribution of $\distp$ and $\distq$. Symm.\ stands for symmetric and Tri.\ stands for triangle inequality.}
    \label{tab:fdiv}
    \begin{tabular}{*5l}\\\toprule
    \textbf{Name} & $D_f(\distp~\|~\distq)$ & \textbf{Generator} $f(t)$ & \textbf{Symm.} & \textbf{Tri.} \\\midrule
    Kullback-Leibler & $\kl(\distp~\|~\distq)$ & $t\log t$ & \xmark & \xmark\\
    Reverse-KL & $\kl(\distq~\|~\distp)$ & $-\log t$ & \xmark & \xmark \\
    Jensen-Shannon & $\jsd(\distp,\distq)\defeq\frac{1}{2}(\kl(\distp\|\distm) + \kl(\distq\|\distm))$ & $t\log t - (t+1)\log(\frac{t+1}{2})$ & \cmark & \xmark \\
    Squared Hellinger & $H^2(\distp, \distq)\defeq\frac{1}{2}\int (\sqrt{d\distp} - \sqrt{d\distq})^2$ & $(1-\sqrt{t})^2/2$ & \cmark & \xmark \\
    Total Variation & $\dtv(\distp, \distq)\defeq\sup_{E}|\distp(E) - \distq(E)|$ & $|t - 1|/2$ & \cmark & \cmark \\\bottomrule
    \end{tabular}
\end{table}

\section{Tradeoff between Fairness and Accuracy}
\label{sec:main}
In this section we take a slight detour from learning fair representations to first provide general results on the tradeoff between fairness and accuracy that applies to any fair classifiers. As we briefly discussed in Section~\ref{sec:gfair}, it is impossible to have imperfect predictor that is both statistically calibrated and verifies equalized odds when the base rates differ between two groups. On the other hand, while it has long been observed that demographic parity may eliminate perfect predictor~\citep{hardt2016equality}, and previous work has empirically verified that tradeoff exists between accuracy and demographic parity~\citep{calders2009building,kamiran2009classifying,zliobaite2015relation} on various datasets, so far a quantitative characterization on the exact tradeoff between accuracy and various notions of group fairness is still missing in the classification setting. In this section, we seek to answer the following intriguing and important question:
\begin{quote}
    \itshape
    In the setting of classification, what is the minimum error that any fair algorithm has to incur, and how does this error depend on the coupling between the target and the protected attribute?
\end{quote}
In what follows we shall first provide a simple example to illustrate this general tradeoff. This example will give readers a flavor the kind of impossibility result we are interested in obtaining. We then proceed to formally present a family of information-theoretic lower bounds on the accuracy that hold for \emph{all} algorithms, even if only approximate statistical parity is satisfied. We conclude this section by some discussions on the implications of our results.

\paragraph{A Simple Example}
As a warm-up, let us consider an example to showcase the potential tradeoff between statistical parity and accuracy in binary classification. But before our construction, it should be noted that the error $\err_{\dist}(\Ypred)$ bears an intrinsic lower bound for any classifier $\Ypred = h(X)$, i.e., the noise in the underlying data distribution $\dist$, e.g., the Bayes error rate. Hence to simplify our discussion, in this example we shall construct distributions such that the Bayes error rates are 0, i.e., for $a\in\{0, 1\}$, there exists a ground-truth labeling function $h_a^*$ such that $Y = h_a^*(X)$ on $\dist_a$. Realize that such simplification will only make it harder for us to prove lower bound on $\err_{\dist_a}$ since there exists classifiers that are perfect.
\begin{example}[A bijection between the target and the protected attribute]
    \label{exp:simple}
    For $a\in\{0, 1\}$, let the marginal distribution $X_\sharp\dist_a$ be a uniform distribution over $\{0, 1\}$. Let $Y = a$ be a constant. Hence by construction, on the joint distribution, we have $Y = A$ hold. Now for any fair predictor $\Ypred = h(X)$, the statistical parity asks $\Ypred$ to be independent of $A$.
    However, no matter what value $h(x)$ takes, we always have $|h(x)| + |h(x) - 1| \geq 1$. Hence for any predictor $h:\xxspace\to\{0,1\}$:
    \begin{align*}
        \err_{\dist_0}(h) + \err_{\dist_1}(h) &= \frac{1}{2}|h(0) - 0| + \frac{1}{2}|h(1) - 0| + \frac{1}{2}|h(0) - 1| + \frac{1}{2}|h(1) - 1| \\
        &\geq \frac{1}{2} + \frac{1}{2} \\
        &= 1.
    \end{align*}
    This shows that for any fair predictor $h$, the sum of the errors of $h$ on both groups has to be at least 1. On the other hand, there exists a trivial unfair algorithm that makes no error on both groups by also taking the protected attribute into consideration: $\forall x\in\{0, 1\}, h^*(x) = 0$ if $A = 0$ else $h^*(x) = 1$.
\end{example}

\subsection{An Accuracy Lower Bound for Fair Classifiers}
In this subsection we generalize the above simple example to general cases without making explicit assumptions on the underlying data generating distributions. Essentially, every prediction function induces the following Markov chain: 
\begin{equation*}
X\overset{g}{\longrightarrow} Z\overset{h}{\longrightarrow} \Ypred,    
\end{equation*}
where $g$ is the feature transformation, $h$ is the classifier on feature space, $Z$ is the feature and $\Ypred$ is the predicted target variable by $h\circ g$. Note that simple models, e.g., linear classifiers, are also included by specifying $g$ to be the identity map. With this notation, we first state the following theorem that quantifies an inherent tradeoff between fairness and accuracy.
\begin{theorem}
\label{thm:lowerbound}
Let $\Ypred = h(g(X))$ be a predictor. If $\Ypred$ satisfies demographic parity, then $\err_{\dist_0}(h\circ g) + \err_{\dist_1}(h\circ g) \geq \dbr(\dist_0, \dist_1)$.
\end{theorem}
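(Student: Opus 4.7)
The plan is to exploit the triangle inequality on total variation distance (which on binary outcomes reduces to the absolute difference of class-$1$ probabilities), with $\widehat{Y}$'s marginals as intermediate waypoints between the $Y$-marginals on the two groups.

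First I would establish the key one-group estimate: for any $a \in \{0,1\}$,
\begin{equation*}
    |\dist_a(Y=1) - \dist_a(\widehat{Y}=1)| \leq \err_{\dist_a}(h\circ g).
\end{equation*}
This follows by decomposing both $\dist_a(Y=1)$ and $\dist_a(\widehat{Y}=1)$ via the joint distribution of $(Y,\widehat{Y})$ under $\dist_a$: the shared term $\dist_a(Y=1, \widehat{Y}=1)$ cancels, leaving the difference $\dist_a(Y=1,\widehat{Y}=0) - \dist_a(Y=0,\widehat{Y}=1)$ whose absolute value is at most the sum, which is precisely $\err_{\dist_a}(h\circ g) = \Exp_{\dist_a}[|Y-\widehat{Y}|]$ for binary classification.

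Next I would apply the triangle inequality with $\dist_0(\widehat{Y}=1)$ and $\dist_1(\widehat{Y}=1)$ as waypoints:
\begin{align*}
    \dbr(\dist_0,\dist_1) &= |\dist_0(Y=1) - \dist_1(Y=1)| \\
    &\leq |\dist_0(Y=1) - \dist_0(\widehat{Y}=1)| + |\dist_0(\widehat{Y}=1) - \dist_1(\widehat{Y}=1)| \\
    &\quad + |\dist_1(\widehat{Y}=1) - \dist_1(Y=1)|.
\end{align*}
The demographic parity assumption forces $\widehat{Y} \perp A$, hence the middle term vanishes, and the two outer terms are bounded by $\err_{\dist_0}(h\circ g)$ and $\err_{\dist_1}(h\circ g)$ respectively by the step above. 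Summing gives the claimed bound.

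The argument is short and the only substantive step is the one-group estimate; the rest is bookkeeping. I do not foresee a real obstacle here, though it is worth noting that the argument tacitly uses $\widehat{Y} \in \{0,1\}$ so that total variation between binary marginals collapses to a single absolute difference, and that demographic parity is used in the strong form $\widehat{Y} \perp A$ rather than merely equal positive rates (the two are equivalent for binary $\widehat{Y}$). If one wished to generalize to approximate demographic parity, the same scheme would carry through with the middle term replaced by $\dpgap(\widehat{Y})$, producing $\err_{\dist_0} + \err_{\dist_1} \geq \dbr(\dist_0,\dist_1) - \dpgap(\widehat{Y})$, which is presumably how the authors will handle the approximate case later in the section.
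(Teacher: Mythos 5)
Your proposal is correct and follows essentially the same route as the paper: the one-group estimate is exactly the paper's Lemma~\ref{lemma:dtverror} (the paper derives it via $|\Exp_{\dist_a}[Y]-\Exp_{\dist_a}[\widehat{Y}]|\leq\Exp_{\dist_a}[|Y-\widehat{Y}|]$ rather than your joint-distribution cancellation, a cosmetic difference), and the triangle inequality with the $\widehat{Y}$-marginals as waypoints, with demographic parity killing the middle term, is precisely the paper's argument phrased in terms of total variation. Your closing remark about the approximate case is also exactly how the paper obtains Corollary~\ref{cor:gap}.
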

\paragraph{Remark}
It is worth pointing out that Theorem~\ref{thm:lowerbound} holds for any representation function $g$ and classifier $h$, as long as the final predictor $\Ypred$ satisfies demographic parity. In particular, by restricting $g$ to be the identity function, we see that the lower bound also holds for any classifier that directly acts on the original input data. We choose the current presentation using a composition function $h\circ g$ only for consistency with the rest of the paper.

Next, $\dbr(\dist_0, \dist_1)$ is the difference of base rates across groups, and it achieves its maximum value of 1 iff there exists a bijection between $Y$ and $A$, e.g., $Y = A$. On the other hand, if $Y$ is independent of $A$, then $\dbr(\dist_0, \dist_1) = 0$ so the lower bound does not make any constraint on the joint error. Hence, the lower bound could also be understood as an uncertainty principle in fairness, stating in general that when the difference of base rates is large, then any fair algorithm has to incur a large error on at least one of the subgroups. Lastly, although Theorem~\ref{thm:lowerbound} asks for exact demographic parity, in Section~\ref{sec:representation} we shall extend the above theorem when only approximate demographic parity is met, via learning fair representations.

Note that from Example~\ref{exp:simple}, we can see this lower bound is tight, in the sense that there exist problem instances where the equality is verified. Second, Theorem~\ref{thm:lowerbound} applies to all possible feature transformation $g$ and predictor $h$. In particular, if we choose $g$ to be the identity map, then Theorem~\ref{thm:lowerbound} says that when the base rates differ, \emph{no algorithm} can achieve a small joint error on both groups, and it also recovers the previous observation that demographic parity can eliminate the perfect predictor~\citep{hardt2016equality}. Third, the lower bound in Theorem~\ref{thm:lowerbound} is insensitive to the marginal distribution of $A$, i.e., it treats the errors from both groups equally. As a comparison, let $\alpha\defeq \dist(A = 1)$, then $\err_\dist(h\circ g) = (1 - \alpha) \err_{\dist_0}(h\circ g) + \alpha\err_{\dist_1}(h\circ g)$. In this case $\err_\dist(h\circ g)$ could still be small even if the minority group suffers a large error. More formally, for the joint error $\err_\dist(h\circ g)$, we have the following corollary hold:
\begin{corollary}
\label{cor:joint}
    Let $\Ypred = h(g(X))$ be a predictor. If $\Ypred$ satisfies demographic parity, then the joint error has the following lower bound: $\err_{\dist}(\Ypred) \geq \Hzo(A)\cdot \dbr(\dist_0, \dist_1)$.
\end{corollary}
Compared with the lower bound in Theorem~\ref{thm:lowerbound}, the lower bound of the joint error in Corollary~\ref{cor:joint} additionally depends on the zero-one entropy of $A$. In particular, if the marginal distribution of $A$ is skewed, then $\Hzo(A)$ will be small, which means that fairness will not reduce the joint accuracy too much. This corollary further implies that when the demographic subgroups are imbalanced in the overall population, the joint accuracy is not an ideal metric to look at, since it may hide the potentially large drop in accuracy of the minority group. In particular, by the pigeonhole principle, the following corollary holds:
\begin{corollary}
If the predictor $\Ypred = h(g(X))$ satisfies demographic parity, then $\max\{\err_{\dist_0}(h\circ g), \err_{\dist_1}(h\circ g)\} \geq \dbr(\dist_0, \dist_1)/2$. 
\end{corollary}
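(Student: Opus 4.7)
The plan is to derive the corollary as an immediate consequence of Theorem~\ref{thm:lowerbound} together with the elementary observation that the maximum of two nonnegative reals is at least their average. Specifically, under the demographic parity hypothesis, Theorem~\ref{thm:lowerbound} already supplies the sum bound $\err_{\dist_0}(h\circ g) + \err_{\dist_1}(h\circ g) \geq \dbr(\dist_0, \dist_1)$, so the only remaining step is to convert this into a lower bound on the larger of the two errors.

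Concretely, I would set $a \defeq \err_{\dist_0}(h\circ g)$ and $b \defeq \err_{\dist_1}(h\circ g)$ (both nonnegative), note that $\max\{a,b\} \geq (a+b)/2$ since otherwise we would have both $a < (a+b)/2$ and $b < (a+b)/2$, contradicting their sum. Chaining this with the sum bound from Theorem~\ref{thm:lowerbound} yields
\[
\max\{\err_{\dist_0}(h\circ g),\, \err_{\dist_1}(h\circ g)\} \;\geq\; \frac{\err_{\dist_0}(h\circ g) + \err_{\dist_1}(h\circ g)}{2} \;\geq\; \frac{\dbr(\dist_0, \dist_1)}{2},
\]
which is exactly the claim.

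Honestly, there is no real obstacle here: all the content is packaged in Theorem~\ref{thm:lowerbound}, and this corollary is a one-line pigeonhole consequence. Its purpose is primarily interpretive, namely translating a lower bound on the joint (summed) error across groups into a lower bound on the \emph{worst-case} group error. The takeaway worth flagging in the write-up is that at least one of the two groups must be penalized with error at least $\dbr(\dist_0,\dist_1)/2$, so the inherent cost of enforcing demographic parity cannot be hidden by averaging or by shifting all the error onto a population that happens to be counted with small weight in $\err_\dist$.
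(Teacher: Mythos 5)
Your argument is correct and matches the paper's: the paper derives this corollary directly from Theorem~\ref{thm:lowerbound} "by the pigeonhole principle," which is precisely your observation that $\max\{a,b\}\geq (a+b)/2$. Nothing is missing.
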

In words, this means that for fair predictors in the demographic parity sense, at least one of the subgroups has to incur an error of at least $\dbr(\dist_0, \dist_1)/2$, which further emphasizes the fundamental role of the difference of base rates, $\dbr(\dist_0, \dist_1)$, in the tradeoff between fairness and accuracy.

\paragraph{Proofs of Theorem~\ref{thm:lowerbound}, Corollary~\ref{cor:joint}}
Before we present the proof, we first present a useful lemma that lower bounds the prediction error by the total variation distance.
\begin{restatable}{lemma}{errorlemma}
\label{lemma:dtverror}
    Let $\Ypred = h(X)$ be a predictor, then for $a\in\{0, 1\}$, $\dtv(\dist_a(Y), \dist_a(\Ypred))\leq \err_{\dist_a}(h)$.
\end{restatable}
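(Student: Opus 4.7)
The plan is to reduce everything to the elementary fact that the total variation distance between two Bernoulli distributions equals the absolute difference of their means, and then bound this by an expectation of an absolute value via Jensen's inequality.

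First, I would observe that because both $Y$ and $\Ypred = h(g(X))$ take values in $\yyspace = \{0,1\}$, the marginal distributions $\dist_a(Y)$ and $\dist_a(\Ypred)$ are Bernoulli with parameters $p \defeq \dist_a(Y=1)$ and $q \defeq \dist_a(\Ypred = 1)$. From the definition $\dtv(\distp,\distq) = \sup_E |\distp(E) - \distq(E)|$ applied to $E = \{1\}$ and $E = \{0\}$, one immediately gets
\[
\dtv\bigl(\dist_a(Y), \dist_a(\Ypred)\bigr) = |p - q|.
\]

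Second, I would rewrite the Bernoulli parameters as expectations, since $p = \Exp_{\dist_a}[Y]$ and $q = \Exp_{\dist_a}[\Ypred]$. Linearity of expectation then gives
\[
|p - q| = \bigl|\Exp_{\dist_a}[Y - \Ypred]\bigr|.
\]

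Third, I would apply Jensen's inequality (equivalently, the triangle inequality for integrals) to pull the absolute value inside the expectation:
\[
\bigl|\Exp_{\dist_a}[Y - \Ypred]\bigr| \;\leq\; \Exp_{\dist_a}\bigl[|Y - \Ypred|\bigr] \;=\; \err_{\dist_a}(h\circ g),
\]
where the last equality is just the definition of $\err_{\dist_a}$. Chaining these three steps together yields the claim.

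No step here is really an obstacle; the only mild subtlety is justifying that the TV distance between two Bernoullis collapses to $|p-q|$ (a one-line check from the definition), and ensuring that one is comparing distributions on the common binary sample space rather than, say, comparing a Bernoulli to a $[0,1]$-valued random variable. Since the paper restricts $h:\xxspace\to\yyspace$ with $\yyspace = \{0,1\}$, this is automatic and the proof is essentially immediate.
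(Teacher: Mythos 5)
Your proposal is correct and follows essentially the same route as the paper's own proof: identify $\dtv(\dist_a(Y),\dist_a(\Ypred))$ with $|\dist_a(Y=1)-\dist_a(\Ypred=1)| = |\Exp_{\dist_a}[Y]-\Exp_{\dist_a}[\Ypred]|$ and bound this by $\Exp_{\dist_a}[|Y-\Ypred|] = \err_{\dist_a}(h\circ g)$ via the triangle inequality for expectations. No gaps; the Bernoulli observation you flag is exactly the implicit first step of the paper's argument.
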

\begin{proof}
    For $a\in\{0, 1\}$, because both $\dist_a(Y)$ and $\dist_a(\Ypred)$ are Bernoulli distributions, we have:
    \begin{align*}
        \dtv(\dist_a(Y), \dist_a(\Ypred)) &= |\dist_a(Y = 1) - \dist_a(h(X) = 1)| \\
        &= \left|\Exp_{\dist_a}[Y] - \Exp_{\dist_a}[h(X)]\right| \\
        &\leq \Exp_{\dist_a}\left[|Y - h(X)|\right] \\
        &= \err_{\dist_a}(h),
    \end{align*}
where the last equality holds because $\Pr_{\dist_a}(Y\neq\Ypred) = \Exp_{\dist_a}\left[|Y - h(X)|\right]$ when $Y,\Ypred\in\{0, 1\}$.
\end{proof}
Now we are ready to prove Theorem~\ref{thm:lowerbound}:
\begin{proof}[Proof of Theorem~\ref{thm:lowerbound}]
    First of all, we show that if $\Ypred = h(g(X))$ satisfies demographic parity, then:
    \begin{align*}
        \dtv(\dist_0(\Ypred), \dist_1(\Ypred)) &= \max\big\{|\dist_0(\Ypred = 0) - \dist_1(\Ypred = 0)|,~|\dist_0(\Ypred = 1) - \dist_1(\Ypred = 1)|\big\} \\
        &= |\dist_0(\Ypred = 1) - \dist_1(\Ypred = 1)| \\
        &= |\dist(\Ypred = 1\mid A = 0) - \dist(\Ypred = 1\mid A = 1)| = 0,
    \end{align*}
    where the last equality follows from the definition of demographic parity. Now from Table~\ref{tab:fdiv}, $\dtv(\cdot, \cdot)$ is symmetric and satisfies the triangle inequality, we have:
    \begin{align}
        \dtv(\dist_0(Y), \dist_1(Y)) &\leq \dtv(\dist_0(Y), \dist_0(\Ypred)) + \dtv(\dist_0(\Ypred), \dist_1(\Ypred)) + \dtv(\dist_1(\Ypred), \dist_1(Y)) \nonumber\\
        &= \dtv(\dist_0(Y), \dist_0(\Ypred)) + \dtv(\dist_1(\Ypred), \dist_1(Y)). 
        \label{equ:1}
    \end{align}
    The last step is to bound $\dtv(\dist_a(Y), \dist_a(\Ypred))$ in terms of $\err_{\dist_a}(h\circ g)$ for $a\in\{0, 1\}$ using Lemma~\ref{lemma:dtverror}:
    \begin{equation*}
        \dtv(\dist_0(Y), \dist_0(\Ypred)) \leq \err_{\dist_0}(h\circ g),\quad\dtv(\dist_1(Y), \dist_1(\Ypred)) \leq \err_{\dist_1}(h\circ g).
    \end{equation*}
    Combining the above two inequalities and~\eqref{equ:1} completes the proof.
\end{proof}
We now provide the proof of Corollary~\ref{cor:joint} on the lower bound of the joint error.
\begin{proof}[Proof of Corollary~\ref{cor:joint}]
    To simplify the notation used in the proof, define $\eps\defeq\err_{\dist}(\Ypred)$, $\eps_0\defeq\err_{\dist_0}(\Ypred)$ and $\eps_1\defeq\err_{\dist_1}(\Ypred)$. Let $\alpha\defeq\Pr_\dist(A = 0)$. By Theorem~\ref{thm:lowerbound}, we know that $\eps_0 + \eps_1 \geq \dbr(\dist_0,\dist_1)$. By definition of the joint error:
    \begin{align*}
        \eps &= \alpha\eps_0 + (1-\alpha)\eps_1 \\
        &\geq \alpha\eps_0 + (1-\alpha)(\dbr(\dist_0,\dist_1) - \eps_0) \\
        &= (1-\alpha)\dbr(\dist_0, \dist_1) + (2\alpha - 1)\eps_0.
    \end{align*}
    Similarly, we can also lower bound the joint error by:
    \begin{equation*}
        \eps \geq \alpha \dbr(\dist_0, \dist_1) + (1-2\alpha)\eps_1.
    \end{equation*}
    Now we discuss in two cases. If $\alpha\leq 1/2$, considering the second inequality yields:
    \begin{align*}
        \eps \geq \alpha \dbr(\dist_0, \dist_1) + (1-2\alpha)\eps_1 \geq \alpha \dbr(\dist_0, \dist_1).
    \end{align*}    
    If $\alpha > 1/2$, using the first inequality we have:
    \begin{align*}
        \eps \geq (1-\alpha)\dbr(\dist_0, \dist_1) + (2\alpha - 1)\eps_0 \geq (1-\alpha)\dbr(\dist_0, \dist_1).
    \end{align*}    
    Combining the above two cases leads to:
    \begin{equation*}
        \eps \geq \min\{\alpha, 1-\alpha\}\cdot \dbr(\dist_0, \dist_1) = \Hzo(A)\cdot \dbr(\dist_0, \dist_1),
    \end{equation*}
    completing the proof.
\end{proof}

It is not hard to show that our lower bound in Theorem~\ref{thm:lowerbound} is tight. To see this, consider the case $A = Y$, where the lower bound achieves its maximum value of 1. Now consider a constant predictor $\Ypred \equiv 1$ or $\Ypred\equiv 0$, which clearly satisfies demographic parity by definition. But in this case either $\err_{\dist_0}(h\circ g) = 1, \err_{\dist_1}(h\circ g) = 0$ or $\err_{\dist_0}(h\circ g) = 0, \err_{\dist_1}(h\circ g) = 1$, hence $\err_{\dist_0}(h\circ g) + \err_{\dist_1}(h\circ g) \equiv 1$, achieving the lower bound. 

To conclude this section, we point out that the choice of total variation in the lower bound is not unique. As we will see shortly in Section~\ref{sec:representation}, similar lower bounds could be attained using specific choices of the general $f$-divergence with some desired properties. 

\subsection{Extension to Multiple Subgroups under Multi-class Classification}
\label{sec:multi}
\begin{figure}[tb]
    \centering
    \includegraphics[width=0.8\linewidth]{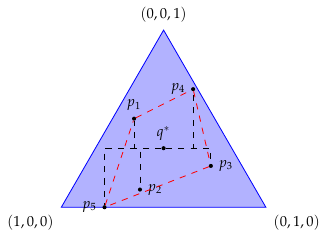}
    \caption{An example of the TV-Barycenter problem in~\eqref{equ:gopt} where $n=5$ and $m=3$. The optimal solution vector $q^*$ corresponds to the barycenter in the convex hull of $\{p_1,\ldots, p_5\}$ that minimizes the sum of $\ell_1$ distances between $p^*$ and $p_i$, $i\in [5]$.}
    \label{fig:my_label}
\end{figure}
The analytical tradeoff lower bound in Theorem~\ref{thm:lowerbound} mainly works for the setting of binary classification $(|\yyspace| = 2)$ with binary protected attribute $(|\aaspace| = 2)$. Hence it is natural to ask whether such lower bounds also exist in the general setting where the target variable $Y$ is a discrete random variable that takes $m \geq 2$ values and the protected attribute $A$ is also a categorical random variable that takes $n \geq 2$ different values. In what follows we shall provide an extension of Theorem~\ref{thm:lowerbound} to this general setting, although in this case we can no longer obtain an analytical characterization of the lower bound. However, as we shall see shortly, the exact lower bound could still be efficiently computed by solving a linear program. 

We first introduce some additional notation that will be used in this section. For a positive integer $K\in\Nat^*$, we use $[K]$ to denote the set $\{1, \ldots, K\}$. We use $\Delta_K$ to denote the $K$-dimensional probability simplex, i.e., $p\in\Delta^K$ if $p\in\RR_+^K$ and $\sum_{i\in [K]} p_i = 1$. For each subgroup $A = i, i\in [n]$, the corresponding marginal distribution of the target label $Y$ could then be described by a $m$-dimensional vector $p_i\in\Delta_m$, i.e., $\dist_i(Y) = p_i$. With these notation, we could then formally establish the following optimization problem:
\begin{equation}
\label{equ:gopt}
    \begin{aligned}
\text{(TV-Barycenter)}:\qquad    &\min_{q} && \frac{1}{2}\sum_{i=1}^n \|q - p_i\|_1 = \frac{1}{2}\sum_{i=1}^n \sum_{j=1}^m |(p_i)_j - q_j| \\
    &\text{subject to} && q\in\Delta_m: q \geq 0,~\sum_{j=1}^m q_j = 1
    \end{aligned}
\end{equation}
We use $\opt(\{p_i\}_{i\in[n]})$ to denote the optimal value of the optimization problem in~\eqref{equ:gopt}. As the name suggests, the above optimization problem computes the barycenter (under the $\ell_1$ distance) $q^*$ of the set of vectors $\{p_i\}_{i\in [n]}$, where each $p_i\in\Delta_m$ corresponds to the marginal label distribution of $Y$ within the group $A = i$. Clearly, the TV-Barycenter problem is a linear program, and hence its optimal solution could be efficiently computed in polynomial time.

We now state the extension of Theorem~\ref{thm:lowerbound} using the optimal solution to the TV-Barycenter problem:
\begin{theorem}
\label{thm:general}
Define $p_i$ to be the probability mass vector of $\dist_i(Y)$: $\forall i\in[n], j\in[m], (p_i)_j = \Pr_{\dist_i}(Y = j)$. Let $\Ypred = h(X)$ be a predictor. If $\Ypred$ satisfies demographic parity, then $\sum_{i=1}^n \err_{\dist_i}(h) \geq \opt(\{p_i\}_{i\in[n]})$.
\end{theorem}
\paragraph{Remark} To see that Theorem~\ref{thm:general} is indeed a generalization of Theorem~\ref{thm:lowerbound}, note that when there are only two groups, i.e., $n = 2$, we can readily read off the optimal solution $q^*$ as: $q^* = (p_1 + p_2) / 2$ by realizing that the objective function is fully decomposable, and $\opt(\{p_i\}_{i\in[n]})$ is
\begin{equation*}
    \frac{1}{2}\left\|p_1 - \frac{p_1 + p_2}{2}\right\|_1 + \frac{1}{2}\left\|p_2 - \frac{p_1 + p_2}{2}\right\|_1 = \frac{1}{2}\|p_1 - p_2\|_1 = \dtv(\dist_1(Y), \dist_2(Y)).
\end{equation*}
Furthermore, when $m = 2$, $\dtv(\dist_1(Y), \dist_2(Y)) = \dbr(\dist_1, \dist_2)$. In general when $n > 2$, we cannot expect to have an analytic solution of $\opt(\{p_i\}_{i\in[n]})$, but nevertheless it can be computed efficiently by solving a linear program. From this perspective, Theorem~\ref{thm:general} builds an equivalent connection between the tradeoff problem in fairness and the barycenter problem in TV-distance. 
\begin{proof}[Proof of Theorem~\ref{thm:general}]
    For $\Ypred = h(X)$, let $C^{(i)}\in\RR_+^{m\times m}$ be the confusion matrix between $Y$ and $\Ypred$ under $\dist_i$, $\forall i\in[n]$. By definition of the confusion matrix, we have $C^{(i)}_{jj'} = \Pr_{\dist_i}(Y = j, \Ypred = j')$. Hence, 
    \begin{equation*}
        \Pr_{\dist_i}(Y\neq \Ypred) = 1 - \sum_{j=1}^m \Pr_{\dist_i}(Y = \Ypred = j) = 1 - \tr(C^{(i)}).
    \end{equation*}
    On the other hand, consider $\dtv(\dist_i(Y), \dist_i(\Ypred))$, we have
    \begin{align*}
        \dtv(\dist_i(Y), \dist_i(\Ypred)) &= \frac{1}{2}\sum_{j=1}^m \left|\Pr_{\dist_i}(Y = j) - \Pr_{\dist_i}(\Ypred = j)\right| \\
        &= \frac{1}{2}\sum_{j=1}^m \left|\sum_{j'\in [m]} C^{(i)}_{jj'} - \sum_{j'\in [m]}C^{(i)}_{j' j}\right| = \frac{1}{2}\sum_{j=1}^m \left|\sum_{j'\neq j} (C^{(i)}_{jj'} - C^{(i)}_{j' j})\right| \\
        &\leq \frac{1}{2}\sum_{j=1}^m\sum_{j'\neq j}C^{(i)}_{jj'} + \frac{1}{2}\sum_{j=1}^m\sum_{j'\neq j}C^{(i)}_{j'j} = \sum_{j\neq j'}C^{(i)}_{jj'} \\
        &= 1 - \tr(C^{(i)}) = \Pr_{\dist_i}(Y\neq \Ypred).
    \end{align*}
    Next, since $\Ypred$ satisfies demographic parity, so $\Ypred\perp A$, which means $\nu\defeq \dist_1(\Ypred) = \cdots = \dist_n(\Ypred)$. Combining the above two arguments together yields
    \begin{align*}
        \inf_{\Ypred = h(X)}\sum_{i\in [n]}\err_{\dist_i}(\Ypred) &= \inf_{\Ypred = h(X)}\sum_{i\in [n]}\Pr_{\dist_i}(Y\neq \Ypred) \\
        &\geq \inf_{\Ypred = h(X)}\sum_{i\in [n]} \dtv(\dist_i(Y), \dist_i(\Ypred)) \\
        &= \inf_{\Ypred = h(X)}\sum_{i\in [n]} \dtv(\dist_i(Y), \nu) \\
        &\geq \inf_{\nu\in\Delta_m}\sum_{i\in [n]} \dtv(\dist_i(Y), \nu) \\
        &= \min_{q\in\Delta_m}\frac{1}{2}\sum_{i=1}^n \|q - p_i\|_1 \\
        &= \opt(\{p_i\}_{i\in[n]}),
    \end{align*}
completing the proof.
\end{proof}

\section{An Optimal Fair Classifier}
\begin{algorithm}[tb]
\caption{Optimal fair classifier}
\label{alg:fairopt}
\begin{algorithmic}[1]
\Require Oracle access to $h_0^*$ and $h_1^*$, the Bayes optimal classifiers over $\dist_0$ and $\dist_1$
\Ensure A randomized optimal fair classifier $\fairopt:\xxspace\times\aaspace\to\yyspace$
\State  Compute $\alpha\defeq \Pr_{\dist_0}(Y = 1)$ and $\beta\defeq \Pr_{\dist_1}(Y=1)$. Without loss of generality assume $\alpha\geq \beta$
\State  For $(x, a)$, randomly sample $s\sim U(0, 1)$, the uniform distribution between $(0,1)$
\State  Construct $\fairopt(x, a)$ as
        \begin{equation}
        \label{equ:fairopt}
            \fairopt(x, a)\defeq\begin{cases}
                a = 0: \begin{cases}
                0   & \text{If } h_0^*(x) = 0 \text{ or } h_0^*(x) = 1 \text{ and } s > \frac{\alpha + \beta}{2\alpha}\\
                1   & \text{If } h_0^*(x) = 1 \text{ and } s \leq \frac{\alpha + \beta}{2\alpha}
                \end{cases} \\
                a = 1: \begin{cases}
                0   & \text{If } h_1^*(x) = 0 \text{ and } s > \frac{\alpha - \beta}{2(1 - \beta)}\\
                1   & \text{If } h_1^*(x) = 1 \text{ or } h_1^*(x) = 0 \text{ and } s \leq \frac{\alpha - \beta}{2(1 - \beta)}
                \end{cases}
            \end{cases}
        \end{equation}
\Return $\fairopt$
\end{algorithmic}
\end{algorithm}

Theorem~\ref{thm:lowerbound} provides an information-theoretic lower bound on the sum of group-wise errors for any fair classifiers. From the proof of Theorem~\ref{thm:lowerbound}, it is clear that the same lower bound also holds for fair classifiers that can have explicit access to the protected attribute $A$. To see this, consider a special case where the input $X$ contains a redundant attribute that is a synonym of the protected attribute $A$. Since Theorem~\ref{thm:lowerbound} holds for any distribution $\mu$ over the triplet $(X, A, Y)$, this simple observation implies that our lower bound also holds for fair classifiers that take the protected attribute $A$ as an input explicitly. 

Although in the last section we briefly mention the tightness of Theorem~\ref{thm:lowerbound} by constructing problem instances and fair classifiers where the equality verifies, it is still unclear whether it is possible to construct an algorithm such that:
\begin{enumerate}
    \item   For any distribution $\dist$ over $(X, A, Y)$, the algorithm returns a (possibly randomized) fair classifier $\fairopt$. 
    \item   The returned fair classifier $\fairopt$ is optimal, in the sense that it verifies the lower bound in Theorem~\ref{thm:lowerbound}: $\err_{\dist_0}(\fairopt) + \err_{\dist_1}(\fairopt) = \dbr(\dist_0, \dist_1)$.
\end{enumerate}

It should be noted that in general it is relatively easy to construct a trivial classifier that is fair in the demographic parity sense. For example, any constant classifier that always outputs 0 or 1 is always fair for any distribution $\dist$ over $(X, A, Y)$, but this classifier is not optimal in the sense of achieving the best possible accuracy. On the other hand, the problem of learning an optimal fair classifier is at least as hard as learning a Bayes optimal classifier, which we formally define as follows.
\begin{definition}[Bayes Optimal Classifier]
Given random variables $(X,A)$ and a target variable $Y$, the Bayes optimal classifier is $h^*\defeq \argmin_{h(x,a)}\Pr(Y\neq h(X, A))$ with $h^*(x, a) = 1$ iff $\Exp[Y\mid X=x, A=a]\geq 1/2$ otherwise 0. We also use $h_a^*(\cdot)$ to denote the restriction of $h^*$ on $A = a$, respectively, i.e., $h_a^*(\cdot)\defeq h^*(\cdot, a)$.
\end{definition}
Namely, $h_a^*(\cdot)$ is the corresponding Bayes optimal classifier on the group $A = a$ for $a\in\{0, 1\}$. With the definition of Bayes optimal classifier, we can formally argue that learning an optimal fair classifier is at least as hard as learning a Bayes optimal classifier by the following reduction.

\paragraph{A Reduction}
Given a distribution $\dist'$ over $\xxspace\times\yyspace$, let $h'(\cdot)$ be the Bayes optimal classifier over $\dist'$. We can create a problem instance of learning the optimal fair classifier by constructing a distribution $\dist$ over $(X, A, Y)$ with $\dist_{A=0} = \dist_{A=1} = \dist'$. Now because $\dist_{A=0} = \dist_{A=1} = \dist'$, it is clear to see that the classifier $h^*(\cdot, \cdot)$ where $h^*(\cdot, 0) = h^*(\cdot, 1) = h'^*(\cdot)$ satisfies demographic parity. Furthermore, the optimality of $h'^*$ over $\dist'$ implies the optimality of $h^*$ over $\mu$ as well. This shows that an oracle call to the problem of learning the optimal fair classifier over $\dist$ can be used to solve the problem of learning a Bayes optimal classifier over $\dist'$, by a restriction of the returned $h^*$ to either $h_0^*$ or $h_1^*$.

Surprisingly, the other direction is also true. More specifically, in what follows we shall present an algorithm to construct a randomized classifier that is both fair and optimal, given oracle access to $h_0^*$ and $h_1^*$, i.e., the group-wise Bayes optimal classifiers. We list the algorithm in Algorithm~\ref{alg:fairopt} and the corresponding decision diagram of $\fairopt$ in Figure~\ref{fig:fairopt}. 
\begin{figure}[tb]
    \centering
    \includegraphics[width=0.8\linewidth]{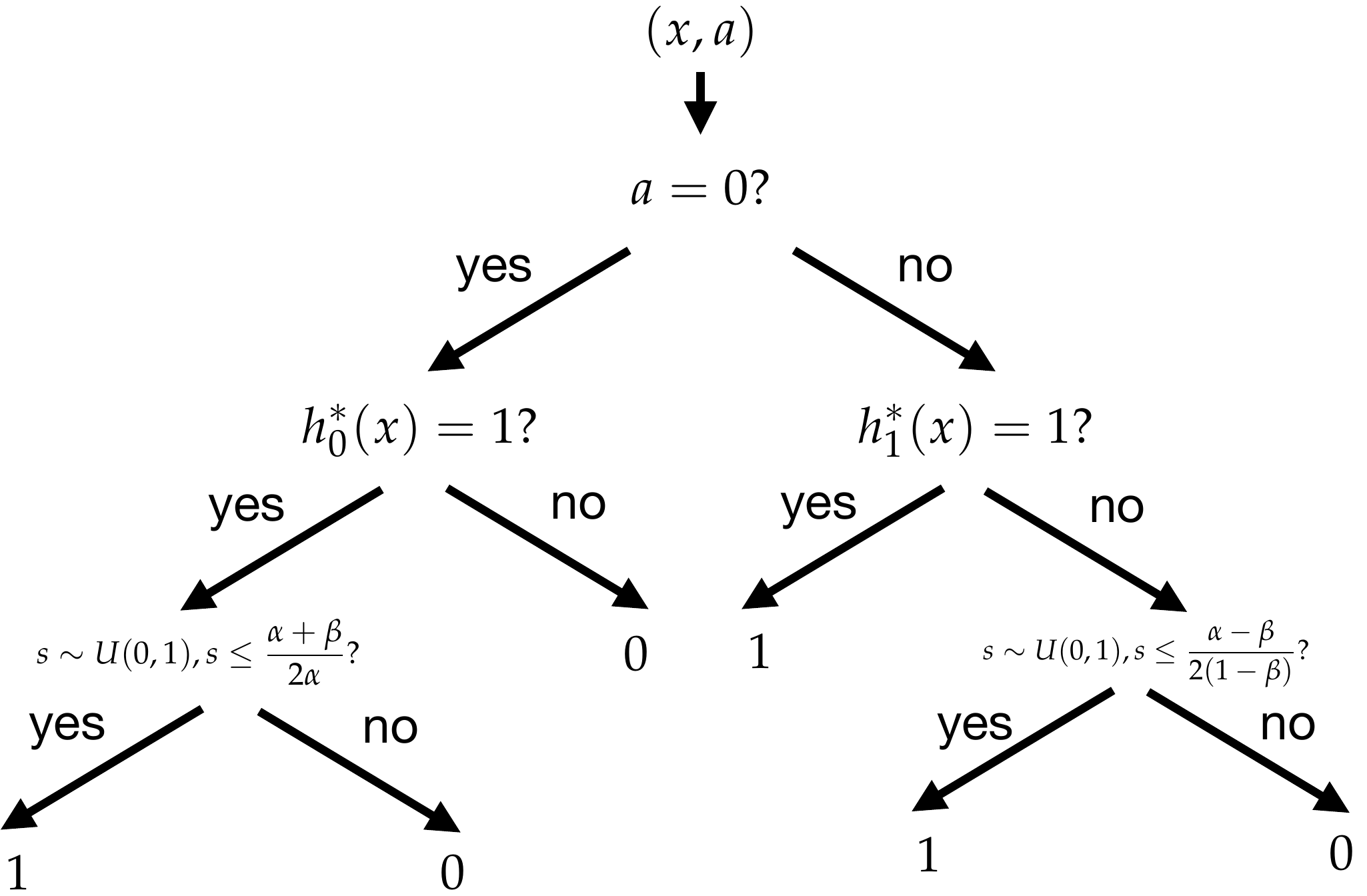}
    \caption{The decision tree diagram of the (randomized) optimal fair classifier $\fairopt$. The optimal fair classifier $\fairopt$ has oracle access to $h_0^*$ and $h_1^*$, the (possibly unfair) Bayes optimal classifier on $\dist_0$ and $\dist_1$, respectively. $\alpha, \beta$ are the base rates over $\dist_0$ and $\dist_1$, i.e, $\alpha\defeq \Pr_{\dist_0}(Y=1)$ and $\beta\defeq \Pr_{\dist_1}(Y=1)$.}
    \label{fig:fairopt}
\end{figure}
\begin{theorem}
\label{thm:fairopt}
    For any distribution $\dist$ over $(X, A, Y)$ such that $Y_{A = 0} = h_0^*(X)$ and $Y_{A=1} = h_1^*(X)$, the classifier $\fairopt$ constructed by Algorithm~\ref{alg:fairopt} satisfies demographic parity and is optimal, i.e., $\err_{\dist_0}(\fairopt) + \err_{\dist_1}(\fairopt) = \dbr(\dist_0, \dist_1)$.
\end{theorem}
Before we present the proof of Theorem~\ref{thm:fairopt}, we first briefly discuss one implication of its assumption that $Y_{A = 0} = h_0^*(X)$ and $Y_{A=1} = h_1^*(X)$. Essentially, this assumption says that there exists a perfect but potentially unfair classifier over $\dist$. Hence in this case $\err_{\dist_0} + \err_{\dist_1}$ exactly corresponds to the price paid by enforcing demographic parity. In what follows we provide the proof for this theorem. 
\begin{proof}[Proof of Theorem~\ref{thm:fairopt}]
    We first show that $\fairopt$ is fair in the demographic parity sense. Let $\Ypred = \fairopt(X, A)$, $\alpha\defeq \Pr(Y = 1\mid A = 0)$ and $\beta\defeq \Pr(Y = 1 \mid A = 1)$. Without loss of generality, we assume $\alpha\geq\beta$. 
    
    For $A = 0$, consider the probability $\Pr_{\dist_0}(\Ypred = 1)$. Note that by construction in Algorithm~\ref{alg:fairopt}, $\fairopt(X, 0) = 0$ whenever $h_0^*(X) = 0$, so
    \begin{align*}
        \Pr_{\dist_0}(\Ypred = 1) &= \Pr_{\dist_0}\left(h_0^*(X) = 1, S \leq \frac{\alpha + \beta}{2\alpha}\right) = \Pr_{\dist_0}\left(h_0^*(X) = 1\right)\cdot \Pr\left(S \leq \frac{\alpha + \beta}{2\alpha}\right) \\
        &= \Pr_{\dist_0}(Y = 1) \cdot \frac{\alpha + \beta}{2\alpha} = \alpha\cdot \frac{\alpha + \beta}{2\alpha} = \frac{\alpha + \beta}{2}.
    \end{align*}
    Similarly, for $A = 1$, recall that by construction, $\fairopt(X, 1) = 1$ whenever $h_1^*(X) = 1$, so
    \begin{align*}
        \Pr_{\dist_1}(\Ypred = 1) &= \Pr_{\dist_1}\left(h_1^*(X) = 1\vee \left(h_1^*(X) = 0 \wedge S \leq \frac{\alpha - \beta}{2(1 - \beta)}\right)\right) \\
        &= \Pr_{\dist_1}\left(h_1^*(X) = 1\right) + \Pr\left(h_1^*(X) = 0\right)\cdot\Pr\left(S \leq \frac{\alpha - \beta}{2(1 - \beta)}\right) \\
        &= \Pr_{\dist_1}(Y = 1) + \Pr_{\dist_1}(Y = 0)\cdot  \frac{\alpha - \beta}{2(1 - \beta)} \\
        &= \beta + (1 - \beta)\cdot\frac{\alpha - \beta}{2(1 - \beta)}  = \frac{\alpha + \beta}{2},
    \end{align*}
    where in the proof above we use the fact that $S\sim U(0, 1)$ is drawn independently of $X$. This shows that $\dist_0(\Ypred) = \dist_1(\Ypred)$ so $\Ypred = \fairopt(X, A)$ is fair. 
    
    Next, we prove that $\fairopt$ is optimal. For $A = 0$, 
    \begin{equation*}
        \err_{\dist_0}(\Ypred) = \Pr(\fairopt(X, 0)\neq Y\mid A = 0) = \Pr(\fairopt(X, 0) \neq h_0^*(X)\mid A= 0).
    \end{equation*}
    However, due to the construction of $\fairopt$ in Algorithm~\ref{alg:fairopt}, $\fairopt(X, 0)\neq h_0^*(X)$ could only happen if $h_0^*(X) = 1$ while $\fairopt(X, 0) = 0$, so
    \begin{align*}
        \err_{\dist_0}(\Ypred) &= \Pr(\fairopt(X, 0) \neq h_0^*(X)\mid A = 0) \\
                                &= \sum_{x}\Pr(\fairopt(x, 0)\neq h_0^*(x)\mid A = 0) \\
                                &= \sum_{x}\Pr(\fairopt(x, 0) = 0, h_0^*(x) = 1\mid A = 0) \\
                                &= \sum_{x}\Pr(\fairopt(x, 0) = 0\mid h_0^*(x) = 1, A = 0)\cdot \Pr(h_0^*(x) = 1\mid A =0) \\
                                &= \sum_{x}\Pr\left(S > \frac{\alpha + \beta}{2\alpha}\right)\cdot \Pr(h_0^*(x) = 1\mid A = 0) \\
                                &= \left(1 - \frac{\alpha + \beta}{2\alpha}\right)\cdot \sum_x \Pr(h_0^*(x) = 1\mid A = 0)\\
                                &= \left(1 - \frac{\alpha + \beta}{2\alpha}\right)\cdot \alpha \\
                                &= \frac{\alpha - \beta}{2}.
    \end{align*}
    Similarly, for $A = 1$, 
    \begin{equation*}
        \err_{\dist_1}(\Ypred) = \Pr(\fairopt(X, 1)\neq Y\mid A = 1) = \Pr(\fairopt(X, 1) \neq h_1^*(X)\mid A = 1).
    \end{equation*}
    Again, due to the construction of $\fairopt$ in Algorithm~\ref{alg:fairopt}, $\fairopt(X, 1)\neq h_1^*(X)$ could only happen if $h_1^*(X) = 0$ while $\fairopt(X, 0) = 1$, so
    \begin{align*}
        \err_{\dist_0}(\Ypred) &= \Pr(\fairopt(X, 1) \neq h_0^*(X)\mid A = 1) \\
                                &= \sum_{x}\Pr(\fairopt(x, 1)\neq h_1^*(x)\mid A = 1) \\
                                &= \sum_{x}\Pr(\fairopt(x, 1) = 1, h_1^*(x) = 0\mid A = 1) \\
                                &= \sum_{x}\Pr(\fairopt(x, 1) = 1\mid h_1^*(x) = 0, A = 1)\cdot \Pr(h_1^*(x) = 0\mid A =1) \\
                                &= \sum_{x}\Pr\left(S \leq \frac{\alpha - \beta}{2(1 - \beta)}\right)\cdot \Pr(h_1^*(x) = 0\mid A = 1) \\
                                &= \frac{\alpha - \beta}{2(1 - \beta)}\cdot \sum_x \Pr(h_1^*(x) = 0\mid A = 1)\\
                                &= \frac{\alpha - \beta}{2(1 - \beta)}\cdot (1 - \beta) \\
                                &= \frac{\alpha - \beta}{2}.
    \end{align*}
    Combining the two cases above shows that $\err_{\dist_0}(\fairopt) + \err_{\dist_1}(\fairopt) = \alpha - \beta = \dbr(\dist_0, \dist_1)$, which completes the proof.
\end{proof}
As a simple corollary of Theorem~\ref{thm:fairopt}, we can now strengthen Theorem~\ref{thm:lowerbound} as follows:
\begin{corollary}
\label{cor:opt}
Let $\Ypred = h(X, A)$ be a predictor that satisfies demographic parity, then $\inf_{\Ypred}\err_{\dist_0}(h) + \err_{\dist_1}(h) = \dbr(\dist_0, \dist_1)$.
\end{corollary}
Note that one difference between Corollary~\ref{cor:opt} and Theorem~\ref{thm:lowerbound} is that the fair predictor in Corollary~\ref{cor:opt} is allowed to have explicit access to the protected attribute $A$ during decision making. This is necessary for our construction of the optimal fair classifier in Algorithm~\ref{alg:fairopt}. It remains an open question whether it is possible to construct an optimal fair classifier to achieve the lower bound in Theorem~\ref{thm:lowerbound} without having explicit access to the protected attribute, as in many practical applications of high stakes the automated decision making process is regulated to not directly use the protected attribute $A$ (\citet{gdpr2021}, Article 22 Paragraph 4).

\section{Approximate Fairness via Learning Fair Representations}
\label{sec:representation}

In the last section we show that there is an inherent tradeoff between fairness and accuracy when a predictor \emph{exactly} satisfies demographic parity. In practice we may not be able to achieve demographic parity precisely. Instead, a line of recent algorithms~\citep{edwards2015censoring,louizos2015variational,beutel2017data,adel2019one,zhao2020conditional} build an adversarial discriminator that takes as input the feature vector $Z = g(X)$, and the goal is to learn fair representations such that it is hard for the adversarial discriminator to infer the group membership from $Z$, typically by solving a minimax objective between the feature encoder and the adversarial discriminator~\citep{edwards2015censoring,beutel2017data,zhang2018mitigating,zhao2020conditional,chi2021understanding}.

In these applications, due to the limit on the capacity of the adversarial discriminator, only approximate demographic parity can be achieved in practice. Hence it is natural to ask what is the tradeoff between fair representations and accuracy in this scenario? In this section we shall answer this question by generalizing our previous analysis with $f$-divergence to prove a family of lower bounds on the joint target prediction error. Our results also show how approximate DP helps to reconcile but not remove the tradeoff between fairness and accuracy. Before we state and prove the main results in this section, we first introduce the following lemma by~\citet{liese2006divergences} as a generalization of the data processing inequality for $f$-divergence:

\begin{lemma}[\citet{liese2006divergences}]
\label{lemma:kernel}
    Let $\Delta(\zzspace)$ be the space of all probability distributions over $\zzspace$. Then for any $f$-divergence
    $D_f(\cdot~\|~\cdot)$, any stochastic kernel $\kappa:\xxspace\to\Delta(\zzspace)$, and any distributions $\distp$ and $\distq$ over $\xxspace$, $D_f(\kappa\distp~\|~\kappa\distq)\leq D_f(\distp~\|~\distq)$.
\end{lemma}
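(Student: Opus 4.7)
The plan is to reduce the inequality to a single application of Jensen's inequality after lifting both sides to a common joint distribution on $\xxspace\times\zzspace$. Concretely, I would define two joint measures $\mu_\distp$ and $\mu_\distq$ on $\xxspace\times\zzspace$ by letting $X$ have marginal $\distp$ (respectively $\distq$) and conditioning $Z\mid X=x\sim\kappa(x,\cdot)$ in both cases. The marginals on $\zzspace$ are then exactly $\kappa\distp$ and $\kappa\distq$, so bounding $D_f(\kappa\distp~||~\kappa\distq)$ amounts to comparing the two marginals that arise from projecting out the $X$ coordinate.

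The crucial observation I would establish next is the identification of Radon--Nikodym derivatives. Writing $r(x)\defeq d\distp/d\distq(x)$, since the conditional kernel $\kappa(x,\cdot)$ is the same in both joint constructions, the density of $\mu_\distp$ with respect to $\mu_\distq$ depends only on $x$: $d\mu_\distp/d\mu_\distq(x,z)=r(x)$. By the tower property, the $\zzspace$-marginal density is obtained by conditional expectation under $\mu_\distq$:
\begin{equation*}
    \frac{d(\kappa\distp)}{d(\kappa\distq)}(z) \;=\; \Exp_{\mu_\distq}[\,r(X)\mid Z=z\,].
\end{equation*}
This step silently uses that $\distp\ll\distq$ implies $\kappa\distp\ll\kappa\distq$, which follows from the kernel being a positive linear map on measures.

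With this in hand the proof is one line of Jensen. Since $f$ is convex, conditional Jensen's inequality yields $f\bigl(\Exp_{\mu_\distq}[r(X)\mid Z]\bigr)\le \Exp_{\mu_\distq}[f(r(X))\mid Z]$ almost surely under $\kappa\distq$. Integrating against $\kappa\distq$ and using the tower property once more,
\begin{equation*}
    D_f(\kappa\distp~||~\kappa\distq)
    = \Exp_{\kappa\distq}\!\left[f\!\left(\tfrac{d(\kappa\distp)}{d(\kappa\distq)}(Z)\right)\right]
    \le \Exp_{\mu_\distq}\bigl[f(r(X))\bigr]
    = \Exp_{\distq}\bigl[f(r(X))\bigr]
    = D_f(\distp~||~\distq),
\end{equation*}
which is the desired inequality.

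The main obstacle is really the second step rather than any inequality: the identification of $d(\kappa\distp)/d(\kappa\distq)$ with a conditional expectation must be justified measure-theoretically (existence of regular conditional distributions, verification that $\kappa$ preserves absolute continuity, and a Fubini-style argument to move between the product space and the marginals). Once that bookkeeping is in place, no computation remains beyond convexity of $f$, and the symmetric convention on the generator $f$ is not needed. This route also makes the equality case transparent: equality holds iff $r(X)$ is $\sigma(Z)$-measurable under $\mu_\distq$, i.e., the kernel is sufficient for distinguishing $\distp$ from $\distq$.
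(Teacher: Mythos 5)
Your proof is correct, but note that the paper does not actually prove this lemma: it is imported verbatim as a citation to Liese and Vajda, and used as a black box in Corollary~\ref{cor:fdiv}. What you have written is a self-contained derivation of that cited result, and it is the standard one: couple $\distp$ and $\distq$ through the same kernel on $\xxspace\times\zzspace$, observe that the joint density ratio is $r(X)=d\distp/d\distq(X)$, identify $d(\kappa\distp)/d(\kappa\distq)(z)$ with $\Exp_{\mu_\distq}[r(X)\mid Z=z]$, and finish with conditional Jensen. Each step checks out, including the implicit claim $\kappa\distp\ll\kappa\distq$ (if $\kappa\distq(E)=0$ then $\mathbbm{1}_E(Z)=0$ holds $\mu_\distq$-a.s., so $\kappa\distp(E)=\Exp_{\mu_\distq}[r(X)\mathbbm{1}_E(Z)]=0$), and the integrability bookkeeping is harmless because $r\geq 0$ with $\Exp_{\distq}[r]=1$ and any convex $f$ admits an affine minorant, so all expectations are well defined in $(-\infty,+\infty]$. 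The only inaccuracy is your closing remark on the equality case: with the paper's hypotheses, $f$ is only required to be strictly convex at $1$, so equality in conditional Jensen does not in general force $r(X)$ to be $\sigma(Z)$-measurable; that characterization needs strict convexity of $f$ on all of $(0,\infty)$ (or a separate argument), and since it is not part of the stated lemma it should either be qualified or dropped. In short: the paper buys the inequality by citation, while your argument supplies the proof behind the citation at the cost of the regular-conditional-distribution formalities you correctly flag.
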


Roughly speaking, Lemma~\ref{lemma:kernel} says that data processing cannot increase discriminating information. Define $\djs(\distp, \distq)\defeq \sqrt{\jsd(\distp, \distq)}$ and $H(\distp, \distq)\defeq \sqrt{H^2(\distp, \distq)}$. It is well-known in information theory that both $\djs(\cdot,\cdot)$ and $H(\cdot,\cdot)$ form a bounded distance metric over the space of probability distributions~\citep[Chapter 4]{wu2017lecture}. Realize that $\dtv(\cdot,\cdot)$, $H^2(\cdot,\cdot)$ and $\jsd(\cdot,\cdot)$ are all $f$-divergence. The following corollary holds:
\begin{corollary}
\label{cor:fdiv}
    Let $h:\zzspace\to\yyspace$ be a classifier, and $g_\sharp\dist_a$ be the pushforward distribution of $\dist_a$ by $g$, $\forall a\in\{0, 1\}$. Let $\Ypred = h(g(X))$ be the predictor, then all the following inequalities hold:
    \begin{enumerate}
        \item   $\dtv(\dist_0(\Ypred), \dist_1(\Ypred))\leq \dtv(g_\sharp\dist_0, g_\sharp\dist_1)$
        \item   $H(\dist_0(\Ypred), \dist_1(\Ypred))\leq H(g_\sharp\dist_0, g_\sharp\dist_1)$
        \item   $\djs(\dist_0(\Ypred), \dist_1(\Ypred))\leq \djs(g_\sharp\dist_0, g_\sharp\dist_1)$
    \end{enumerate}
\end{corollary}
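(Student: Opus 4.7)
The plan is to reduce all three inequalities to Lemma~\ref{lemma:kernel} (the data processing inequality for $f$-divergences) applied with the kernel given by the classifier $h$. Specifically, since $\Ypred = h(g(X))$, the distribution $\dist_a(\Ypred)$ is precisely the pushforward of the feature distribution $g_\sharp\dist_a$ under $h$, that is, $h_\sharp (g_\sharp\dist_a)$. A deterministic map is a degenerate stochastic kernel, so Lemma~\ref{lemma:kernel} yields
\begin{equation*}
    D_f\bigl(h_\sharp g_\sharp\dist_0~\|~h_\sharp g_\sharp\dist_1\bigr) \leq D_f\bigl(g_\sharp\dist_0~\|~g_\sharp\dist_1\bigr)
\end{equation*}
for every $f$-divergence.

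For part 1, I would plug in the generator $f(t) = |t-1|/2$ to recover total variation directly; from Table~\ref{tab:fdiv}, $\dtv$ is an $f$-divergence, so the inequality follows immediately. For part 2, I would instantiate Lemma~\ref{lemma:kernel} with the squared Hellinger generator $f(t) = (1-\sqrt{t})^2/2$, obtaining $H^2(\dist_0(\Ypred), \dist_1(\Ypred)) \leq H^2(g_\sharp\dist_0, g_\sharp\dist_1)$, and then take square roots of both (nonnegative) sides, using the definition $H(\distp,\distq) \defeq \sqrt{H^2(\distp,\distq)}$. Part 3 is analogous: I would use the Jensen--Shannon generator $f(t) = t\log t - (t+1)\log((t+1)/2)$ from Table~\ref{tab:fdiv}, apply Lemma~\ref{lemma:kernel} to obtain $\jsd(\dist_0(\Ypred), \dist_1(\Ypred)) \leq \jsd(g_\sharp\dist_0, g_\sharp\dist_1)$, then take square roots to conclude via $\djs \defeq \sqrt{\jsd}$.

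There is essentially no hard step in this proof; the key conceptual move is recognizing that the Markov chain $X \to Z \to \Ypred$ implies that $\dist_a(\Ypred)$ factors through $g_\sharp\dist_a$, so that the data processing inequality is directly applicable with the deterministic classifier $h$ as the kernel. The only minor subtlety is that parts 2 and 3 are stated in terms of the square-root quantities $H$ and $\djs$ rather than $H^2$ and $\jsd$, but since both $H^2$ and $\jsd$ are nonnegative and the square root is monotone, the inequalities transfer without difficulty. No additional structure of $h$ (beyond being a measurable map into $\yyspace$) is needed, so the corollary holds for any hypothesis.
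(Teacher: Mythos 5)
Your proof is correct and follows exactly the route the paper intends: the corollary is stated right after Lemma~\ref{lemma:kernel} with the remark that $\dtv$, $H^2$, and $\jsd$ are all $f$-divergences, so the intended argument is precisely your application of the data processing inequality with the deterministic classifier $h$ as the kernel, plus taking square roots for the $H$ and $\djs$ cases.
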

Now we are ready to present the following main theorem of this section:
\begin{restatable}{theorem}{lowerbounds}
\label{thm:all}
Let $\Ypred = h(g(X))$ be the predictor where $h:\zzspace\to\{0,1\}$ is any classifier acting on the feature space. Assume $\djs(g_\sharp\dist_0, g_\sharp\dist_1)\leq\djs(\dist_0(Y), \dist_1(Y))$ and $H(g_\sharp\dist_0, g_\sharp\dist_1)\leq H(\dist_0(Y), \dist_1(Y))$, then the following three inequalities hold:
\begin{enumerate}
    \item   Total variation lower bound: 
        \begin{align*}
            \err_{\dist_0}(h\circ g) + \err_{\dist_1}(h\circ g) \geq \dtv(\dist_0(Y), \dist_1(Y)) - \dtv(g_\sharp\dist_0, g_\sharp\dist_1).
        \end{align*}
    \item   Jensen-Shannon lower bound: 
        \begin{align*}
            \err_{\dist_0}(h\circ g) + \err_{\dist_1}(h\circ g) \geq \big(\djs(\dist_0(Y), \dist_1(Y)) - \djs(g_\sharp\dist_0, g_\sharp\dist_1)\big)^2 / 2. 
        \end{align*}
    \item   Hellinger lower bound: 
        \begin{align*}
            \err_{\dist_0}(h\circ g) + \err_{\dist_1}(h\circ g) \geq \big(H(\dist_0(Y), \dist_1(Y)) - H(g_\sharp\dist_0, g_\sharp\dist_1)\big)^2 / 2.
        \end{align*}
\end{enumerate}
\end{restatable}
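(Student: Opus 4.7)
The plan is to mirror the three-step structure of the proof of Theorem~\ref{thm:lowerbound}, instantiated separately for each metric $D\in\{\dtv,\djs,H\}$. In each case I would combine (i) the triangle inequality for $D$, (ii) the data processing inequality for the underlying $f$-divergence via Corollary~\ref{cor:fdiv}, and (iii) a bound on $D(\dist_a(Y),\dist_a(\Ypred))$ in terms of $\err_{\dist_a}(h\circ g)$. Step (i) applied with the intermediate pair $(\dist_0(\Ypred),\dist_1(\Ypred))$ gives
\begin{equation*}
D(\dist_0(Y),\dist_1(Y)) \leq D(\dist_0(Y),\dist_0(\Ypred)) + D(\dist_0(\Ypred),\dist_1(\Ypred)) + D(\dist_1(\Ypred),\dist_1(Y)),
\end{equation*}
and step (ii) replaces the middle term by $D(g_\sharp\dist_0,g_\sharp\dist_1)$.

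For the TV bound, step (iii) is exactly Lemma~\ref{lemma:dtverror}, and substituting and rearranging produces the first claim verbatim. For the Hellinger and JS bounds, I would invoke the standard $f$-divergence comparisons $H^2(\distp,\distq)\leq\dtv(\distp,\distq)$ and $\jsd(\distp,\distq)\leq\dtv(\distp,\distq)$, both easy to verify on the Bernoulli pairs $\dist_a(Y),\dist_a(\Ypred)$ that arise here. Combined with Lemma~\ref{lemma:dtverror} these yield $H(\dist_a(Y),\dist_a(\Ypred))\leq\sqrt{\err_{\dist_a}(h\circ g)}$ and $\djs(\dist_a(Y),\dist_a(\Ypred))\leq\sqrt{\err_{\dist_a}(h\circ g)}$, so substitution into the triangle-inequality display produces
\begin{equation*}
D(\dist_0(Y),\dist_1(Y)) - D(g_\sharp\dist_0,g_\sharp\dist_1) \leq \sqrt{\err_{\dist_0}(h\circ g)} + \sqrt{\err_{\dist_1}(h\circ g)}.
\end{equation*}
The theorem's hypothesis is precisely what ensures the left-hand side is nonnegative, so squaring is safe; applying $\sqrt{a}+\sqrt{b}\leq\sqrt{2(a+b)}$ on the right and then squaring both sides yields the quadratic bound with the $1/2$ prefactor.

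The main obstacle is justifying the two comparison inequalities $H^2\leq\dtv$ and $\jsd\leq\dtv$. The Hellinger case reduces, for the Bernoullis in play, to the elementary inequality $1-\sqrt{pq}-\sqrt{(1-p)(1-q)}\leq|p-q|$, a short calculus exercise. The JS case can be handled either via Lin's entropy identity $\jsd(\distp,\distq) = H((\distp+\distq)/2) - \tfrac{1}{2}H(\distp) - \tfrac{1}{2}H(\distq)$ together with concavity of entropy, or through the sharper known bound $\jsd \leq (\log 2)\,\dtv$. Once these two comparisons are in hand, everything else is algebraic manipulation parallel to the TV argument already carried out in Theorem~\ref{thm:lowerbound}, which is why the three lower bounds can be packaged together through the abstract metric $D$.
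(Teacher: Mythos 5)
Your proposal is correct and follows essentially the same route as the paper's proof: triangle inequality for each metric, the data-processing step from Corollary~\ref{cor:fdiv}, the comparisons $H^2\leq\dtv$ and $\jsd\leq\dtv$ (the latter being exactly Lin's lemma, which the paper cites) combined with Lemma~\ref{lemma:dtverror}, then AM--GM and squaring under the nonnegativity guaranteed by the hypothesis. The only cosmetic difference is that you verify the comparison inequalities directly on the Bernoulli marginals rather than citing them for general distributions.
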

\paragraph{Remark}
All the three lower bounds in Theorem~\ref{thm:all} imply a tradeoff between the joint error across demographic subgroups and learning group-invariant feature representations. When $g_\sharp\dist_0 = g_\sharp\dist_1$, due to the data-processing principle, we also have $\dist_0(\Ypred) = \dist_1(\Ypred)$, and all three lower bounds get larger. In this case, we have 
\begin{align*}
    \max\left\{\dtv(\dist_0(Y), \dist_1(Y)), \frac{1}{2}\djs^2(\dist_0(Y), \dist_1(Y)), \frac{1}{2}H^2(\dist_0(Y), \dist_1(Y))\right\} =&~\dtv(\dist_0(Y), \dist_1(Y)) \\
     =&~\dbr(\dist_0, \dist_1),
\end{align*}
and this reduces to Theorem~\ref{thm:lowerbound}. We now present the proof for Theorem~\ref{thm:all}.
\begin{proof}[Proof of Theorem~\ref{thm:all}]
    We prove the three inequalities respectively. The total variation lower bound follows the same idea as the proof of Theorem~\ref{thm:lowerbound} and the inequality $\dtv(\dist_0(\Ypred), \dist_1(\Ypred))\leq \dtv(g_\sharp\dist_0, g_\sharp\dist_1)$ from Corollary~\ref{cor:fdiv}. To prove the Jensen-Shannon lower bound, realize that $\djs(\cdot,\cdot)$ is a distance metric over probability distributions. Combining with the inequality $\djs(\dist_0(\Ypred), \dist_1(\Ypred))\leq \djs(g_\sharp\dist_0, g_\sharp\dist_1)$ from Corollary~\ref{cor:fdiv}, we have:
    \begin{equation*}
        \djs(\dist_0(Y), \dist_1(Y)) \leq \djs(\dist_0(Y), \dist_0(\Ypred)) + \djs(g_\sharp\dist_0, g_\sharp\dist_1) + \djs(\dist_1(\Ypred), \dist_1(Y)).
    \end{equation*}
    Now by Lin's lemma~\citep[Theorem 3]{lin1991divergence}, for any two distributions $\distp$ and $\distq$, we have $\djs^2(\distp, \distq)\leq\dtv(\distp, \distq)$. Combine Lin's lemma with Lemma~\ref{lemma:dtverror}, we get the following lower bound:
    \begin{equation*}
        \sqrt{\err_{\dist_0}(h\circ g)} + \sqrt{\err_{\dist_1}(h\circ g)} \geq \djs(\dist_0(Y), \dist_1(Y)) - \djs(g_\sharp\dist_0, g_\sharp\dist_1).
    \end{equation*}
    Apply the AM-GM inequality, we can further bound the L.H.S.\ by
    \begin{equation*}
        \sqrt{2\big(\err_{\dist_0}(h\circ g) + \err_{\dist_1}(h\circ g)\big)}\geq \sqrt{\err_{\dist_0}(h\circ g)} + \sqrt{\err_{\dist_1}(h\circ g)}.
    \end{equation*}
    Under the assumption that $\djs(g_\sharp\dist_0, g_\sharp\dist_1)\leq\djs(\dist_0(Y), \dist_1(Y))$, taking a square at both sides then completes the proof for the second inequality. The proof for Hellinger's lower bound follows exactly as the one for Jensen-Shannon's lower bound, except that instead of Lin's lemma, we need to use the fact that $H^2(\distp,\distq)\leq \dtv(\distp,\distq)\leq \sqrt{2}H(\distp,\distq)$, $\forall\distp, \distq$.
\end{proof}
As a simple corollary of Theorem~\ref{thm:all}, the following result shows how approximate DP (in terms of the DP gap) helps to reconcile the tradeoff between fairness and accuracy, by controlling the divergence between different groups of representations:
\begin{corollary}
\label{cor:gap}
Let $g:\xxspace\to\zzspace$ be a feature transformation. If $\dtv(g_\sharp\dist_0, g_\sharp\dist_1)\leq\epsilon$, then for any classifier $h:\zzspace\to\yyspace$, the DP gap $\dpgap(h\circ g)\leq \epsilon$, and $\err_{\dist_0}(h\circ g) + \err_{\dist_1}(h\circ g) \geq \dbr(\dist_0, \dist_1) - \epsilon$.
\end{corollary}
In a sense Corollary~\ref{cor:gap} means that in order to lower the joint error, the DP gap of the predictor cannot be too small. Of course, since the above inequality is a lower bound, it only serves as a necessary condition for small joint error. Hence an interesting question would be to ask whether it is possible to have a sufficient condition that guarantees a small joint error such that the DP gap of the predictor is no larger than that of the perfect predictor, i.e., $\dbr(\dist_0, \dist_1)$. 

\section{Fair Representations Lead to Accuracy Parity}
In the previous sections we prove a family of information-theoretic lower bounds that demonstrate an inherent tradeoff between fair representations and joint error across groups. A natural question to ask then, is, what kind of parity can fair representations bring us? To complement our negative results, in this section we show that learning group-invariant representations help to reduce discrepancy of errors across groups. 

First of all, since we work under the stochastic setting where $\dist_a$ is a joint distribution over $X$ and $Y$ conditioned on $A = a$, then any function mapping $h:\xxspace\to\yyspace$ will inevitably incur an error due to the noise existed in the distribution $\dist_a$. In the case of binary classification, this error is also known as the Bayes error. Formally, for $a\in\{0, 1\}$, let $h_a^*:\xxspace\to\yyspace$ be the Bayes optimal classifier on $\dist_a$. Now define the noise of distribution $\dist_a$ (the Bayes error on $\dist_a$) to be $n_{\dist_a}\defeq \Pr_{\dist_a}(Y\neq h_a^*(X))   = \Exp_{\dist_a}[|Y - h_a^*(X)|]$. We are now ready to present the following theorem:
\begin{theorem}[Error Decomposition Theorem]
    For any hypothesis $\HH\ni h:\xxspace\to\yyspace$, the following inequality holds:
    \begin{align*}
        \left|\err_{\dist_0}(h) - \err_{\dist_1}(h)\right| \leq&~ (n_{\dist_0} + n_{\dist_1}) + \dtv(\dist_0(X), \dist_1(X)) \\
        &~+ \min\left\{\Exp_{\dist_0}[|h_0^* - h_1^*|], \Exp_{\dist_1}[|h_0^* - h_1^*|]\right\}.
    \end{align*}
\label{thm:accuracy}
\end{theorem}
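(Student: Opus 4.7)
The plan is to reduce the gap $|\err_{\dist_0}(h) - \err_{\dist_1}(h)|$ to a difference of absolute deviations of $h$ from the group-specific optimal predictors $h_0^*$ and $h_1^*$, absorbing the noise terms as the irreducible residual. The first step is the sandwich
$$
\bigl|\err_{\dist_a}(h) - \Exp_{\dist_a}[|h(X) - h_a^*(X)|]\bigr| \leq n_{\dist_a},\qquad a\in\{0,1\},
$$
which follows by applying the pointwise triangle inequality $\bigl||Y-h(X)|-|h(X)-h_a^*(X)|\bigr|\leq |Y-h_a^*(X)|$ and integrating under $\dist_a$. Three applications of the (scalar) triangle inequality then give
$$
|\err_{\dist_0}(h)-\err_{\dist_1}(h)|\leq n_{\dist_0}+n_{\dist_1}+\bigl|\Exp_{\dist_0}[|h-h_0^*|]-\Exp_{\dist_1}[|h-h_1^*|]\bigr|,
$$
so the $n_{\dist_0}+n_{\dist_1}$ summand appears for free, and all that remains is to bound the residual deviation-of-deviations by $\dtv(\dist_0(X),\dist_1(X))+\min\{\Exp_{\dist_0}[|h_0^*-h_1^*|],\,\Exp_{\dist_1}[|h_0^*-h_1^*|]\}$.

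Next I would telescope through a carefully chosen pivot. Using $\Exp_{\dist_1}[|h-h_0^*|]$ as the pivot yields
$$
\bigl|\Exp_{\dist_0}[|h-h_0^*|]-\Exp_{\dist_1}[|h-h_1^*|]\bigr|\leq \bigl|\Exp_{\dist_0}[|h-h_0^*|]-\Exp_{\dist_1}[|h-h_0^*|]\bigr|+\bigl|\Exp_{\dist_1}[|h-h_0^*|]-\Exp_{\dist_1}[|h-h_1^*|]\bigr|.
$$
The first summand captures the mismatch between the marginals on $X$: since $h$ and $h_0^*$ both take values in $\{0,1\}$, the integrand $|h(X)-h_0^*(X)|$ is a function of $X$ alone taking values in $[0,1]$, so total variation bounds the expectation gap, yielding $\dtv(\dist_0(X),\dist_1(X))$. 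The second summand captures the mismatch between the Bayes-optimal predictors: by the pointwise inequality $\bigl||h-h_0^*|-|h-h_1^*|\bigr|\leq |h_0^*-h_1^*|$, it is bounded by $\Exp_{\dist_1}[|h_0^*-h_1^*|]$.

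Finally, I would repeat the argument with the alternative pivot $\Exp_{\dist_0}[|h-h_1^*|]$, which by the same two-step decomposition yields the symmetric bound with $\Exp_{\dist_0}[|h_0^*-h_1^*|]$ in place of $\Exp_{\dist_1}[|h_0^*-h_1^*|]$. Taking the smaller of the two recovers the stated minimum and completes the proof. The main obstacle is not technical—it is recognizing the right decomposition: separating the contribution of the marginal shift on $X$ (which naturally produces the $\dtv$ term) from the contribution of the cross-group difference between the Bayes-optimal labelings (which naturally produces $\Exp_{\dist_a}[|h_0^*-h_1^*|]$), while observing that the two admissible pivots leave us the freedom to take whichever of the two expectations is smaller.
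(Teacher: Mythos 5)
Your proposal is correct and follows essentially the same route as the paper's proof: the same sandwich $|\err_{\dist_a}(h)-\Exp_{\dist_a}[|h-h_a^*|]|\leq n_{\dist_a}$, the same telescoping through the two cross pivots $\Exp_{\dist_1}[|h-h_0^*|]$ and $\Exp_{\dist_0}[|h-h_1^*|]$, with the marginal-shift term controlled by $\dtv(\dist_0(X),\dist_1(X))$ and the cross-group term by $\Exp_{\dist_a}[|h_0^*-h_1^*|]$, and the minimum obtained by taking the better of the two pivots. No gaps; your observation that the $[0,1]$-valued integrand suffices for the total-variation bound is a harmless slight generalization of the paper's $\{0,1\}$-valued argument.
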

\paragraph{Remark}
Theorem~\ref{thm:accuracy} upper bounds the discrepancy of accuracy across groups by three terms: the sum of group-wise noise, the distance of representations across groups and the discrepancy of the Bayes optimal classifiers. In an ideal setting where both distributions are noiseless, i.e., same individuals in the same group are always treated similarly, the upper bound simplifies to the latter two terms: 
\begin{equation*}
    \left|\err_{\dist_0}(h) - \err_{\dist_1}(h)\right| \leq \dtv(\dist_0(X), \dist_1(X)) + \min\left\{\Exp_{\dist_0}[|h_0^* - h_1^*|], \Exp_{\dist_1}[|h_0^* - h_1^*|]\right\}.
\end{equation*}
If we further require that the optimal decision functions $h_0^*$ and $h_1^*$ are close to each other, i.e., optimal decisions are insensitive to the group membership, then Theorem~\ref{thm:accuracy} implies that a sufficient condition to guarantee accuracy parity is to find group-invariant representation that minimizes $\dtv(\dist_0(X), \dist_1(X))$. Note, however, we should also pay attention to ensure that the first term as well as the third term in the upper bound do not increase drastically when learning the representations, as a change of the data representation will also change the noise term as well as the distance between the optimal decision functions based on the representations. We now present the proof for Theorem~\ref{thm:accuracy}:
\begin{proof}[Proof of Theorem~\ref{thm:accuracy}]
    First, we show that for $a\in\{0, 1\}$, $\err_{\dist_a}(h)$ cannot be too large if $h$ is close to $h_a^*$. Note that since we are focusing on binary classification problems, for any two classifiers $h, h'$, $\Pr(h(X)\neq h'(X)) = \Exp[|h(X) - h'(X)|]$.
    \begin{align*}
        \left|\err_{\dist_a}(h) - \Exp_{\dist_a}[|h(X) - h_a^*(X)|]\right| &= |\Exp_{\dist_a}[|h(X) - Y|] - \Exp_{\dist_a}[|h(X) - h_a^*(X)|]| \\
        &\leq  \Exp_{\dist_a}[\big||h(X) - Y| - |h(X) - h_a^*(X)|\big|]\\
        &\leq  \Exp_{\dist_a}[|Y- h_a^*(X)|] = n_{\dist_a},
    \end{align*}
    where both inequalities are due to the triangle inequality. Next, we bound $\left|\err_{\dist_0}(h) - \err_{\dist_1}(h)\right|$ by:
    \begin{equation*}
        \left|\err_{\dist_0}(h) - \err_{\dist_1}(h)\right| \leq (n_{\dist_0} + n_{\dist_1}) + \left| \Exp_{\dist_0}[|h(X) - h_0^*(X)|] - \Exp_{\dist_1}[|h(X) - h_1^*(X)|]\right|.
    \end{equation*}
    In order to show this, define $\varepsilon_{a}(h, h')\defeq \Exp_{\dist_a}[|h(X) - h'(X)|]$ so that
    \begin{equation*}
        \big| \Exp_{\dist_0}[|h(X) - h_0^*(X)|] - \Exp_{\dist_1}[|h(X) - h_1^*(X)|]\big| = \big|\varepsilon_0(h, h_0^*) - \varepsilon_1(h, h_1^*)\big|.
    \end{equation*}
    To bound $\big|\varepsilon_0(h, h_0^*) - \varepsilon_1(h, h_1^*)\big|$, realize that $|h(X) - h_a^*(X)|\in\{0, 1\}$. On one hand, we have:
    \begin{align*}
        \big|\varepsilon_0(h, h_0^*) - \varepsilon_1(h, h_1^*)\big| &= \big| \varepsilon_0(h, h_0^*) - \varepsilon_0(h, h_1^*) + \varepsilon_0(h, h_1^*) - \varepsilon_1(h, h_1^*)\big| \\
        &\leq \big| \varepsilon_0(h, h_0^*) - \varepsilon_0(h, h_1^*)\big| + \big|\varepsilon_0(h, h_1^*) - \varepsilon_1(h, h_1^*)\big| \\
        &\leq \varepsilon_0(h_0^*, h_1^*) + \dtv(\dist_0(X), \dist_1(X)),
    \end{align*}
    where the last inequality is due to $\big|\varepsilon_0(h, h_1^*) - \varepsilon_1(h, h_1^*)\big| = \big|\dist_0(|h - h_1^*| = 1) - \dist_1(|h - h_1^*| = 1)\big| \leq \sup_E|\dist_0(E) - \dist_1(E)| = \dtv(\dist_0, \dist_1)$. Similarly, by subtracting and adding back $\varepsilon_1(h, h_0^*)$ instead, we can also show that $\big|\varepsilon_0(h, h_0^*) - \varepsilon_1(h, h_1^*)\big| \leq \varepsilon_1(h_0^*, h_1^*) + \dtv(\dist_0(X), \dist_1(X))$. 

    Combine the above two inequalities yielding:
    \begin{equation*}
        \big|\varepsilon_0(h, h_0^*) - \varepsilon_1(h, h_1^*)\big| \leq \min\{\varepsilon_0(h_0^*, h_1^*), \varepsilon_1(h_0^*, h_1^*)\} + \dtv(\dist_0(X), \dist_1(X)).
    \end{equation*}
    Incorporating the sum of group-wise noise back to the above inequality by using one more triangle inequality then completes the proof.
\end{proof}

\section{Empirical Results}
\label{sec:experiment}
Our theoretical results on the lower bound imply that over-training the feature transformation function to achieve group-invariant representations will inevitably lead to large joint errors. On the other hand, our upper bound also implies that group-invariant representations help to achieve accuracy parity. To verify these theoretical implications, in this section we conduct experiments on a real-world benchmark dataset, the UCI Adult dataset, to present empirical results with various metrics. 

\paragraph{Dataset}
The Adult dataset contains 30,162/15,060 training/test instances for income prediction. Each instance in the dataset describes an adult from the 1994 US Census. Attributes include gender, education level, age, etc. In this experiment we use gender (binary) as the sensitive attribute, and we preprocess the dataset to convert categorical variables into one-hot representations. The processed data contains 114 attributes. The target variable (income) is also binary: 1 if $\geq$ 50K/year otherwise 0. For the sensitive attribute $A$, $A = 0$ means Male otherwise Female. In this dataset, the base rates across groups are different: $\Pr(Y = 1\mid A = 0) = 0.310$ while $\Pr(Y = 1\mid A = 1) = 0.113$. Also, the group ratios are different: $\Pr(A = 0) = 0.673$.

\paragraph{Experimental Protocol}
To validate the effect of learning group-invariant representations with adversarial debiasing techniques~\citep{zhang2018mitigating,madras2018learning,beutel2017data}, we perform a controlled experiment by fixing the baseline network architecture to be a three hidden-layer feed-forward network with ReLU activations. The number of units in each hidden layer are 500, 200, and 100, respectively. The output layer corresponds to a logistic regression model. This baseline without debiasing is denoted as NoDebias. For debiasing with adversarial learning techniques, the adversarial discriminator network takes the feature from the last hidden layer as input, and connects it to a hidden-layer with 50 units, followed by a binary classifier whose goal is to predict the sensitive attribute $A$. This model is denoted as AdvDebias. Compared with NoDebias, the only difference of AdvDebias in terms of objective function is that besides the cross-entropy loss for target prediction, the AdvDebias also contains a classification loss from the adversarial discriminator to predict the sensitive attribute $A$. In the experiment, all the other factors are fixed to be the same between these two methods, including learning rate, optimization algorithm, training epoch, and also batch size. To see how the adversarial loss affects the joint error, the demographic parity as well as the accuracy parity, we vary the coefficient $\rho$ for the adversarial loss between 0.1, 1.0, 5.0 and 50.0.

\paragraph{Results and Analysis}
The experimental results are listed in Table~\ref{tab:results}. Note that in the table $|\err_{\dist_0} - \err_{\dist_1}|$ could be understood as measuring an approximate version of accuracy parity, and similarly $\dpgap(\Ypred)$ measures the closeness of the classifier to satisfy demographic parity. From the table, it is then clear that with increasing $\rho$, both the overall error $\err_\dist$ (sensitive to the marginal distribution of $A$) and the joint error $\err_{\dist_0} + \err_{\dist_1}$ (insensitive to the imbalance of $A$) are increasing. As expected, $\dpgap(\Ypred)$ is drastically decreasing with the increasing of $\rho$. Furthermore, $|\err_{\dist_0} - \err_{\dist_1}|$ is also gradually decreasing, but much slowly than $\dpgap(\Ypred)$. This is due to the existing noise in the data as well as the shift between the optimal decision functions across groups, as indicated by our upper bound. To conclude, all the empirical results are consistent with our theoretical findings. 
\begin{table}[htb]
    \centering
    \caption{Adversarial debiasing on demographic parity, joint error across groups, and accuracy parity.}
    \label{tab:results}
    \begin{tabular}{lcccc}\toprule
        & $\err_\dist$  & $\err_{\dist_0} + \err_{\dist_1}$ & $\left|\err_{\dist_0} - \err_{\dist_1}\right|$ & $\dpgap(\Ypred)$\\\midrule
    NoDebias & 0.157 & 0.275 & 0.115 & 0.189 \\
    AdvDebias, $\rho = 0.1$ & 0.159 & 0.278 & 0.116 & 0.190 \\
    AdvDebias, $\rho = 1.0$ & 0.162 & 0.286 & 0.106 & 0.113 \\
    AdvDebias, $\rho = 5.0$ & 0.166 & 0.295 & 0.106 & 0.032 \\
    AdvDebias, $\rho = 50.0$ & 0.201 & 0.360 & 0.112 & 0.028 \\
    \bottomrule
    \end{tabular}
\end{table}

\section{Related Work}
\label{sec:related}
\paragraph{Tradeoff between Fairness and Accuracy}
Although it has long been empirically observed that there is an inherent tradeoff between accuracy and statistical parity in both classification and regression problems~\citep{calders2009building,zafar2015fairness,zliobaite2015relation,corbett2017algorithmic,zhao2020conditional,zhao2021costs}, precise characterizations on such tradeoffs are less explored. \citet[Proposition 8]{menon2018cost} explored such tradeoff in terms of the fairness frontier function under the context of cost-sensitive binary classification. In this work the fair machine learning problem is reduced to learning a classifier
which optimizes a difference between cost-sensitive risks, one with respect to the target variable and one with respect to the sensitive variable. \citet{zhao2019inherent} proved a lower bound of accuracy on both the sum of group-wise errors as well as the joint error that has to be incurred by any fair algorithm satisfying statistical parity. In this paper, assuming oracle access to Bayes optimal classifiers, we also give an algorithm to construct an optimal fair classifier that can verify the lower bound. Furthermore, we also extend the preliminary result in~\citet{zhao2019inherent} for binary classification and binary protected attribute to the general multi-class classification setting where the protected attribute can take more than two values, i.e., there are more than two groups defined by the protected attribute. Recently, \citet{chzhen2020fair} and \citet{le2020projection} concurrently derived an analytic bound to characterize the price of statistical parity in regression when the learner can take the sensitive attribute explicitly as an input for $\ell_2$ loss. In this case, the lower bound is given by the optimal transportation distance from two group distributions to a common one, characterized by the $W_2$ barycenter. Our result complements this line of works for the classification setting, where we show that the price paid by a fair classifier for general multi-class classification problems with a categorical protected attribute is given by the so-called TV-Barycenter problem. 

On the upside, under certain data generative assumptions of the sampling bias, there is a line of recent works showing that fairness constraints could instead improve the accuracy of the predictor~\citep{dutta2020there,blum2020recovering}. In particular, \citet{blum2020recovering} prove that if the observable data are subject to labeling bias, then the Equality of Opportunity constraint could help recover the Bayes optimal classifier. Note that this does not contradict with our results, since in this work we do not make any assumptions on the underlying training distributions, and we mainly focus on statistical parity and accuracy parity, rather than equalized odds.

\paragraph{Regularization Techniques}
The line of work on fairness-aware learning through regularization dates at least back to~\citet{kamishima2012fairness}, where the authors argue that simple deletion of sensitive features in data is insufficient for eliminating biases in automated decision making, due to the possible correlations among attributes and sensitive information~\citep{lum2016statistical}. In light of this, the authors proposed a \emph{prejudice remover} regularizer that essentially penalizes the mutual information between the predicted goal and the sensitive information. In a more recent approach,~\citet{zafar2015fairness} leveraged a measure of decision boundary fairness and incorporated it via constraints into the objective function of logistic regression as well as support vector machines. As discussed in Section~\ref{sec:preliminary}, both approaches essentially reduce to achieving demographic parity through regularization. 

\paragraph{Fair Representations}
In a pioneer work, \citet{zemel2013learning} proposed to preserve both group and individual fairness through the lens of representation learning, where the main idea is to find a good representation of the data with two competing goals: to encode the data for accuracy maximization while at the same time to obfuscate any information about membership in the protected group. Due to the power of learning rich representations offered by deep neural nets, recent advances in building fair automated decision making systems focus on using adversarial techniques to learn fair representation that also preserves enough information for the prediction vendor to achieve his accuracy~\citep{edwards2015censoring,louizos2015variational,beutel2017data,zhang2018mitigating,adel2019one,song2019fair,zhao2020conditional}. \citet{madras2018learning} further extended this approach by incorporating reconstruction loss given by an autoencoder into the objective function to preserve demographic parity, equalized odds, and equal opportunity.

\section{Discussion and Conclusion}
In this paper we theoretically and empirically study the important problem of quantifying the tradeoff between accuracy and statistical parity in algorithmic fairness. Specifically, we prove a novel lower bound to characterize the tradeoff between statistical parity and the joint accuracy across different population groups when the base rates differ between groups. In particular, our results imply that, in general, any method aiming to satisfy statistical parity admits an information-theoretic lower bound on the joint error. This holds even only approximate statistical parity is met. In light of this impossibility result, under the statistical parity constraint, we can only hope to design algorithms that achieve the accuracy lower bound. To this end, assuming oracle access to the potentially unfair Bayes classifiers, we construct an algorithm that returns a randomized classifier, and we prove that this randomized classifier is both optimal (in terms of accuracy) and fair. 

When the number of groups defined by the protected attribute is more than two, we can no longer obtain an analytic form of the lower bound. Nevertheless, we show that it can be efficiently computed by solving a linear program in polynomial time, which we term as the TV-Barycener problem. This finding also builds a connection between the tradeoff problem in algorithmic fairness and the barycenter problem (under the TV-distance) in optimal transport. Complementary to our negative results, we also show that learning fair representations leads to accuracy parity if the Bayes optimal classifiers across different groups are close. Our theoretical findings are also confirmed empirically on a real-world dataset. We believe our results take an important step towards better understanding the tradeoff between accuracy and different notions of fairness.

\acks{
HZ and GG would like to acknowledge support from the DARPA XAI project, contract \#FA87501720152 and a Nvidia GPU grant. HZ would also like to thank support from a Facebook research award. The authors are very grateful to the anonymous reviewers for the suggestions on improving the presentation of this work.}

\newpage
\bibliography{reference}







\end{document}